\newtheorem{theorem}{Theorem}[section]
\newtheorem{corollary}{Corollary}[section]
\newcommand{\cmt}[1]{}
\newcommand{\eg}{{\textit{e.g.},}\xspace}
\newcommand{\ie}{{\textit{i.e.},}\xspace}
\newcommand{\etc}{{\textit{etc}.}\xspace}
\newcommand{\dmf}{DMFAL\xspace}
\newcommand{\ours}{BMFAL-BC\xspace}
\newcommand{\dmfal}{DMFAL\xspace}
\newcommand{\dmfalbc}{DMFAL-BC\xspace}
\renewcommand{\vec}{{\rm vec}}
\newcommand{\h}{{\bf h}}
\newcommand{\x}{{\bf x}}
\newcommand{\y}{{\bf y}}
\newcommand{\A}{{\bf A}}
\newcommand{\Dcal}{\mathcal{D}}
\newcommand{\Qcal}{{\mathcal{Q}}}
\newcommand{\I}{{\bf I}}
\newcommand{\Mcal}{{\mathcal{M}}}
\newcommand{\Ocal}{{\mathcal{O}}}
\newcommand{\N}{\mathcal{N}}  
\newcommand{\W}{{\bf W}}
\newcommand{\Xcal}{{\mathcal{X}}}
\newcommand{\Ycal}{{\mathcal{Y}}}
\newcommand{\bphi}{\boldsymbol{\phi}}
\newcommand{\btheta}{\boldsymbol{\theta}}
\newcommand{\bxi}{\boldsymbol{\xi}}
\newcommand{\bSigma}{\boldsymbol{\Sigma}}
\newcommand{\bmu}{\boldsymbol{\mu}}
\newcommand{\0}{{\bf 0}}
\newcommand{\ben}{\begin{enumerate}}
\newcommand{\een}{\end{enumerate}}
\newcommand{\argmax}{\operatornamewithlimits{argmax}}
\newcommand{\EE}{\mathbb{E}}
\newcommand{\bbI}{\mathbb{I}}
\newcommand{\bbH}{\mathbb{H}}
\title{Batch Multi-Fidelity Active Learning with Budget Constraints}
\author{%
	Shibo Li\thanks{Equal contribution \;\; Correspondence to: Jeff M. Phillips, Shandian Zhe.}$^*$, Jeff M. Phillips$^*$, Xin Yu, Robert M. Kirby, and Shandian Zhe\\
	School of Computing, University of Utah\\
	Salt Lake City, UT 84112\\
	\texttt{\{shibo, jeffp, xiny, kirby, zhe\}@cs.utah.edu}
}
\begin{document}

\maketitle


\begin{abstract}
Learning functions with high-dimensional outputs is critical in many applications, such as physical simulation and engineering design. However, collecting training examples for these applications is often costly, \eg by running numerical solvers. The recent work \citep{li2020deep} proposes the first multi-fidelity active learning approach for high-dimensional outputs, which can acquire examples at different fidelities to reduce the cost while improving the learning performance. However,  this method only queries at one pair of fidelity and input at a time, and hence has a risk to bring in strongly correlated examples to reduce the learning efficiency. In this paper, we propose Batch Multi-Fidelity Active Learning with Budget Constraints (\ours), which can promote the diversity of training examples to improve the benefit-cost ratio, while respecting a given budget constraint for batch queries. Hence, our method can be more practically useful. Specifically, we propose a novel batch acquisition function that measures the mutual information between a batch of multi-fidelity queries and the target function, so as to penalize highly correlated queries and encourages diversity. 
The optimization of the batch acquisition function is challenging in that it involves a combinatorial search over many fidelities while subject to the budget constraint. To address this challenge, we develop a weighted greedy algorithm that can sequentially identify each (fidelity, input) pair, while achieving a near $(1 - 1/e)$-approximation of the optimum. We show the advantage of our method in several computational physics and engineering applications. 
\end{abstract}
\section{Introduction}
Applications, such as in computational physics and engineering design, often demand we calculate a complex mapping from low-dimensional inputs to high-dimensional outputs, such as finding the optimal material layout (output) given the design parameters (input), and solving the solution field  on a mesh (output) given the PDE parameters (input). Computing these mappings is often very expensive, \eg iteratively running numerical solvers. Hence, learning a surrogate model to outright predict the mapping, which is much faster and cheaper, is of great practical interest and importance~\citep{kennedy2000predicting,conti2010bayesian}

However, collecting training examples for the surrogate model becomes another bottleneck, since each example still requires a costly computation.  To alleviate this issue, \citet{li2020deep} developed DMFAL, a first deep  multi-fidelity active learning algorithm, which can acquire  examples at different fidelities to reduce the cost of data collection. Low-fidelity examples are cheap to compute (\eg with coarse meshes) yet inaccurate; high-fidelity examples are accurate but much more expensive (\eg calculated with dense grids). See Fig. \ref{fig:motivation} for an illustration. DMFAL uses an optimization-based acquisition method to dynamically identify the input and fidelity at which to query new examples, so as to improve the learning performance,  lower the sample complexity, and reduce the cost.

Despite its effectiveness, DMFAL can only optimize and query at one pair of input and fidelity each time  and hence ignores the correlation between consecutive queries. As a result, it has a risk of bringing in strongly correlated examples, which can restrict the learning efficiency and lead to a suboptimal benefit-cost ratio. In addition, the sequential querying and training strategy is difficult to utilize parallel computing resources that are common nowadays (\eg multi-core CPUs/GPUs and computer clusters) to query concurrently and to further speed up.

In this paper, we propose \ours, a batch multi-fidelity active learning method with budget constraints. Our method can acquire a batch of multi-fidelity examples at a time to inhibit the example correlations, promote diversity so as to improve the learning efficiency and benefit-cost ratio. Our method can respect a given budget in issuing batch queries, hence are more widely applicable and practically useful. Specifically, we first propose a novel acquisition function, which measures the mutual information between a batch of multi-fidelity queries and the target function. The acquisition function not only can  penalize highly correlated queries to encourage diversity, but also can be efficiently computed by an Monte-Carlo approximation. However, optimizing the acquisition function is challenging because it incurs a combinatorial search over fidelities and meanwhile needs to obey the constraint. To address this challenge, we develop a weighted greedy algorithm. We sequentially find one pair of fidelity and input each step, by maximizing the increment of the mutual information weighted by the cost. In this way, we avoid enumerating the fidelity combinations and greatly improve the efficiency. We prove that our greedy algorithm nearly achieves a $(1-\frac{1}{e})$-approximation of the optimum, with a few minor caveats.

For evaluation, we examined \ours in five real-world applications, including three benchmark tasks in physical simulation (solving Poisson's, Heat and viscous Burger's equations),  a topology structure design  problem,  and  a computational fluid dynamics (CFD) task to predict the velocity field of boundary-driven flows. We compared with the budget-aware version of DMFAL,  single multi-fideity querying  with our acquisition function,  and several random querying strategies. Under the same budget constraint, our method consistently outperforms the competing methods throughout the learning process, often by a large margin.
\section{Background}
\subsection{Problem Setting}
Suppose we aim to learn a mapping $f: \Omega \subseteq \mathbb{R}^r \rightarrow \mathbb{R}^d$ where  $r$ is small but   $d$ is  large, \eg hundreds of thousands. To economically learn this mapping, we collect training examples at $M$ fidelities. Each fidelity $m$ corresponds to mapping $f_m: \Omega \rightarrow \mathbb{R}^{d_m}$. The target mapping is computed at the highest fidelity, \ie $f(\x) = f_M(\x)$. The other $f_m$ can be viewed as a (rough) approximation of $f$. Note that $d_m$ is unnecessarily the same as $d$ for $m<M$. For example, solving PDEs on a coarse mesh will give a lower-dimensional output (on the mesh points). However, we can interpolate it to the  $d$-dimensional space to match $f(\cdot)$ (this is standard in physical simulation~\citep{zienkiewicz1977finite}). Denote by $\lambda_m$ the cost of computing $f_m(\cdot)$ at fidelity $m$. We have $\lambda_1 \le \ldots \le  \lambda_M$. 

\subsection{Deep Multi-Fidelity Active Learning (DMFAL)}\label{sect:model}
To effectively estimate $f$ while reducing the cost, \citet{li2020deep} proposed DMFAL, a multi-fidelity deep active learning approach. Specifically,  a neural network (NN) is introduced for each fidelity $m$, where a low-dimensional hidden output $\h_m(\x)$ is first generated, and then projected to the high-dimensional observation space. Each NN is parameterized by $\left(\A_m, \W_m, \btheta_m\right)$, where $\A_m$ is the projection matrix, $\W_m$ is the weight matrix of the last layer, and $\btheta_m$ consists of the remaining NN parameters. The model is defined as follows,
\begin{align}
	\x_m = [\x; \h_{m-1}(\x)], \;\;\h_m(\x) &= \W_m\bphi_{\btheta_m}(\x_m), \;\;\y_m(\x) = \A_m \h_m(\x) + \bxi_m, \label{eq:model}
\end{align}
where $\x_m$ is the input to the NN at fidelity $m$, $\y_m(\x)$ is the observed $d_m$ dimensional output, $\bxi_m\sim \N(\cdot|\0, \tau_m\I)$ is a random noise, $\bphi_{\btheta_m}(\x_m)$ is the output of the second last layer and can be viewed as a nonlinear feature transformation of $\x_m$. Since $\x_m$ includes not only the original input $\x$, but also the hidden output from the previous fidelity, \ie $\h_{m-1}(\x)$, the model can propagate information throughout fidelities and capture the complex relationships (\eg nonlinear and nonstationary) between different fidelities. The whole model is visualized in Fig. \ref{fig:graphical} of Appendix. To estimate the posterior of the model, DMFAL uses a structural variational inference algorithm. A multi-variate Gaussian posterior is introduced for each weight matrix, $q(\W_m) = \N\big(\vec(\W_m)|\bmu_m, \bSigma_m\big)$. A variational evidence lower bound (ELBO) is maximized via stochastic optimization and the reparameterization trick~\citep{kingma2013auto}.
\begin{figure}[t]
	\centering
	\includegraphics[width=1.0\textwidth]{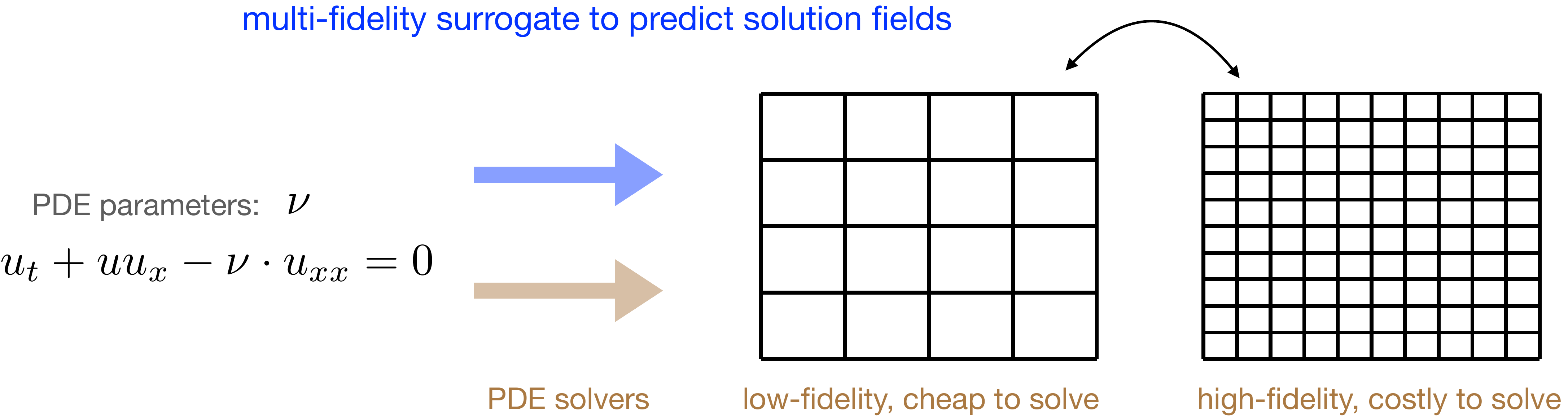}
	\caption{\small Illustration of the motivation and goal with physical simulation as an example. It is costly to solve every PDE from scratch. We aim to train a multi-fidelity surrogate model to directly predict high-fidelity solution fields given the PDE parameters. 
	To further reduce the cost of collecting the training data with numerical solvers, we seek to develop multi-fidelity active learning algorithms.} \label{fig:motivation}
\end{figure}

To conduct active learning,  DMFAL views the most valuable example at each fidelity $m$ as the one that can best help its prediction at the highest fidelity $M$ (\ie the target function). Accordingly, the acquisition function is defined as
\begin{align}
	&a(m,\x) = \frac{1}{\lambda_m} \bbI\big(\y_m(\x), \y_M(\x)|\Dcal\big) =\frac{1}{\lambda_m} \left(\bbH(\y_m |\Dcal) + \bbH(\y_M|\Dcal) - \bbH(\y_m, \y_M|\Dcal) \right), \label{eq:ac-single}
\end{align}
where $\bbI(\cdot, \cdot)$ is the mutual information, $\bbH(\cdot)$ is the entropy, and $\Dcal$ is the current training dataset.  The computation of the acquisition  function is quite challenging because $\y_m$ and $\y_M$ are both high dimensional. To address this issue, DMFAL takes advantage of the fact that each low-dimensional output $\h_m(\x)$ is a nonlinear function  of the random weight matrices $\{\W_1, \ldots, \W_m\}$.  Based on the variational posterior $\{q(\W_j)\}$,  DMFAL uses the multi-variate delta method~\citep{oehlert1992note,bickel2015mathematical} to estimate the mean and covariance of $\widehat{\h} = [\h_m; \h_M]$, with which  to construct a joint Gaussian posterior approximation for $\widehat{\h}$ by moment matching. Then, from the projection in \eqref{eq:model}, we can derive a Gaussian posterior for $[\y_m; \y_M]$. By further applying Weinstein-Aronszajn identify~\citep{kato2013perturbation}, we can compute the entropy terms in \eqref{eq:ac-single} conveniently and efficiently. 

Each time, DMFAL maximizes the acquisition function \eqref{eq:ac-single} to identify a pair of input and fidelity at which to query. The new example is added into $\Dcal$, and  the deep multi-fidelity model is retrained and updated. The process repeats until the maximum number of training examples have been queried or other stopping criteria are met.  

\section{Batch Multi-Fidelity Active Learning with Budget Constraints}
While effective,   DMFAL can only identify and query at one input and fidelity each time, hence ignoring the correlations between the successive queries. As a result,  highly correlated examples are easier to be acquired and incorporated into the training set.  Consider we have found $(\x^*, m^*)$ that maximizes the acquisition function \eqref{eq:ac-single}. It is often the case that a single example will not cause a significant change of the model posterior $\{q(\W_m)\}$ (especially when the current dataset $\Dcal$ is not very small).  When we optimize the acquisition function again, we are likely to obtain a new pair $(\hat{\x}^*, \hat{m}^*)$ that is close to $(\x^*, m^*)$ (\eg $\hat{m}^* = m^*$ and $\hat{\x}^*$ close to $\x^*$). Accordingly, the queried example will be highly correlated to the previous one.  The redundant information within these examples can restrict the learning efficiency, and demand for more queries (and cost) to achieve the satisfactory performance. 
Note that the similar issue have been raised in other single-fidelity, pool-based active learning works, \eg~\citep{geifman2017deep,kirsch2019batchbald}; see Sec \ref{sect:rel}. 

To overcome this problem, we propose \ours, a batch multi-fidelity active learning approach  to reduce the correlations and to promote the diversity of training examples, so that we can improve the learning performance, lower the sample complexity, and better save the cost. In addition, we take into account the budget constraint in querying the batch examples, which is common in practice (like cloud or high-performance computing)~\citep{mendes2020trimtuner}.  Under the given budget, we aim to find a batch of multi-fidelity queries that improves the benefit-cost ratio as much as possible.

\subsection{Batch Acquisition Function}

Specifically, we first consider an acquisition function that allows us to jointly optimize a set of inputs and fidelities. While it seems natural to consider how to extend \eqref{eq:ac-single} to the batch case, the acquisition function in \eqref{eq:ac-single} is about the mutual information between $\y_m(\x)$ and $\y_M(\x)$. That means, it only measures the utility of the query $(m, \x)$ in improving the estimate of the target function at $\x$ itself (\ie $\y_M(\x)$), rather than at any other input. 
To take into account the utility in improving the function estimation at all the inputs, we therefore propose a new single acquisition function, 
\begin{align}
	{a}_s(m, \x) = \EE_{p(\x')}\left[ \mathbb{I}\left(\y_m(\x), \y_M(\x')|\Dcal\right)\right] \label{eq:new-acqn}
\end{align}
where $\x'\in \Omega$ and $p(\x')$ is the distribution of the input in $\Omega$.  We can see that, by varying the input $\x'$ in the second argument of the mutual information, we are able to evaluate the utility of the query in improving the estimation of the entire body of the target function. Hence, it is more reasonable and comprehensive.  Now, consider querying a batch of examples under the budget $B$, we extend \eqref{eq:new-acqn} to
\begin{align}
	{a}_{\text{batch}}(\Mcal, \Xcal) ={\EE_{p(\x')}\left[\bbI\left(\{\y_{m_j}(\x_j) \}_{j=1}^n , \y_M(\x') |\Dcal\right)\right]}, \;\;\; \text{s.t.} \;\; {\sum_{j=1}^n \lambda_{m_j} \le B}, \label{eq:ac-batch}
\end{align}
where $\Mcal = \{m_1, \ldots, m_n\}$, $\Xcal = \{\x_1, \ldots, \x_n\}$, and $\Dcal$ is the current training dataset.  We can see that the more correlated $\{\y_{m_j}(\x_j)\}_{j=1}^n$, the smaller the mutual information, and hence the smaller the expectation in \eqref{eq:ac-batch}. Therefore, the batch acquisition function implicitly penalizes strongly correlated queries and favors diversity. 

The expected mutual information in \eqref{eq:ac-batch} is usually analytically intractable. However, we can efficiently compute it with an Monte-Carlo approximation. That is, we draw $A$ independent samples from the input distribution, $\x'_1, \ldots, \x'_A \sim p(\x')$, and compute 
\begin{align}
	\widehat{a}_{\text{batch}}(\Mcal, \Xcal) = \frac{1}{A} \sum_{l=1}^A \bbI\left(\{\y_{m_j}(\x_j) \}_{j=1}^n , \y_M(\x'_l) |\Dcal\right). \label{eq:ac-batch2} \;\; 
\end{align}
It is straightforward to extend the multi-variate delta method used in DMFAL to calculate the mutual information in \eqref{eq:ac-batch2}. We can then maximize \eqref{eq:ac-batch2} subject to the budget constraint, $\sum_{j=1}^B \lambda_{m_j} \le B$, to acquire more diverse and informative training examples. 
 In addition, when parallel computing resources (\eg multi-core CPUs/GPUs and computer clusters) are available, we can acquire these queries in parallel to further speed up the active learning.  

However, directly maximizing \eqref{eq:ac-batch2} will incur a combinatorial search over multiple fidelities in  $\Mcal$, and hence is very expensive. Note that the number of examples $n$ is not fixed, and can vary as long as the cost does not exceed the budget $B$, which makes the optimization even more challenging.  To address these issues, we propose a weighted greedy algorithm to sequentially determine each $(m_j, \x_j)$ pair.  

\subsection{Weighted Greedy Optimization}\label{sect:greedy}

Specifically, at each step, we maximize the mutual information increment on one pair of input and fidelity, weighted by the corresponding cost. Suppose at step $k$, we have identified a set of $k$ inputs and fidelities at which to query, $\Qcal_{k} = \{(\x_j, m_j)\}_{j=1}^{k}$. To identify a new pair of input and fidelity at step $k+1$, we maximize a weighted incremental version of \eqref{eq:ac-batch2}, 
\begin{align}
	\hat{a}_{k+1}(\x, m)& =\frac{1}{A}\sum_{l=1}^A\frac{\bbI\left(\Ycal_k \cup \{\y_m(\x)\}, \y_M(\x'_l)|\Dcal\right) - \bbI\left(\Ycal_k, \y_M(\x'_l)|\Dcal\right)}{\lambda_m} \notag \\
	 &\text{s.t.}\;\; {\lambda_{m} + \sum_{\widehat{m} \in \Mcal_k} \lambda_{\widehat{m}}   } \le B, \label{eq:ac-greedy}
\end{align}
where $\Ycal_k = \{\y_{m_j}(\x_j)|(\x_j, m_j) \in \Qcal_k \}$, and $\Mcal_k$ are all the fidelities in $\Qcal_{k}$. At the beginning ($k=0$), we have $\Qcal_{k} = \emptyset$, $\Ycal_k = \emptyset$ and $\Mcal_k = \emptyset$. Each step, we look at each valid fidelity, \ie $1 \le \lambda_m \le B - \sum_{\widehat{m} \in \Mcal_k}\lambda_{\widehat{m}}$, and find the optimal input. We then add the pair $(m, \x)$ that gives the largest $\hat{a}_{k+1}$ into $\Qcal_k$, and proceed to the next step. Our greedy approach is summarized in Algorithm \ref{algo:algo}. 

The intuition of our approach is as follows. Mutual information is a classic submodular function~\citep{krause2005near}, and hence if there were no budget constraints or weights, greedily choosing the input which increases the mutual information most achieves a solution within $(1-1/e)$ of the optimal~\citep{krause2014submodular}.  However, \citet{leskovec2007cost} observed that when items have a weight (corresponding to the cost for different fidelities in our case) and there is a budget constraint, then the approximation factor can be arbitrarily bad. 
We observe, {\it and prove}, that this only occurs as the budget is about to be filled, and up until that point, the weighted greedy optimization achieves the best possible $(1-1/e)$-approximation of the optimal.  We can formalize this {\it near $(1-1/e)$-approximation} in two ways, proven in the Appendix.  Let OPT($B$) be the maximum amount of mutual information that can be achieved with a budget $B$.  

\begin{theorem}
\label{thm:near-sm-opt}
At any step of Weighted-Greedy (Algorithm \ref{algo:algo}) before any choice of fidelity would exceed the budget, and the total budget used to that point is $B' < B$, then the mutual information of the current solution is within $(1-1/e)$ of OPT($B'$).  
\end{theorem}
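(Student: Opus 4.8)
The plan is to reduce Theorem~\ref{thm:near-sm-opt} to the classical analysis of the greedy algorithm for monotone submodular maximization under a cardinality-style constraint, exploiting the fact that \emph{before} the budget would be exceeded the weighted-greedy rule behaves exactly like the standard (unweighted) greedy rule applied in a suitably rescaled problem. First I would record the two structural facts about the set function $F(\Qcal) = \EE_{p(\x')}\big[\bbI(\{\y_{m_j}(\x_j)\}_{(\x_j,m_j)\in\Qcal},\y_M(\x')\mid\Dcal)\big]$ that make the argument go through: (i) $F$ is monotone nondecreasing, since adding an observation can only increase mutual information with a fixed target, and (ii) $F$ is submodular, which follows because $\bbI(\cdot,\y_M(\x')\mid\Dcal)$ is submodular in its first (set) argument for each fixed $\x'$ (this is the ``mutual information is submodular'' fact cited via \citep{krause2005near,krause2014submodular}), and a nonnegative average of submodular functions is submodular. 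These two facts are really the only analytic input; everything else is the combinatorial bookkeeping forced by the weights and the budget.

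Next I would set up the comparison with OPT($B'$). Fix a step of the algorithm at which the budget consumed so far is $B' < B$ and at which \emph{every} fidelity could still have been added without exceeding $B$ (this is precisely the hypothesis ``before any choice of fidelity would exceed the budget''). Let $\Qcal_0,\Qcal_1,\dots,\Qcal_K$ be the nested sequence of solutions produced up to this point, with $B' = \sum_{(\x_j,m_j)\in\Qcal_K}\lambda_{m_j}$, and let $\Qcal^\star$ be an optimal solution achieving OPT($B'$) with $\sum_{(\x_j,m_j)\in\Qcal^\star}\lambda_{m_j}\le B'$. The key step is the standard one-step inequality, adapted to weights: at step $k+1$ the weighted-greedy rule picks the pair maximizing the \emph{ratio} $\Delta_{k+1}(\x,m)/\lambda_m$ of marginal gain to cost, so for every pair $(\x,m)\in\Qcal^\star$ we have $\Delta_{k+1}(\x_{k+1},m_{k+1})/\lambda_{m_{k+1}} \ge \Delta_{k+1}(\x,m)/\lambda_m$, hence $\Delta_{k+1}(\x_{k+1},m_{k+1}) \ge \frac{\lambda_{m_{k+1}}}{\lambda_{m}}\,\Delta_{k+1}(\x,m)$. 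Summing a weighted version of this over the elements of $\Qcal^\star$ together with submodularity (so that $\sum_{(\x,m)\in\Qcal^\star}\Delta_{k+1}(\x,m) \ge F(\Qcal^\star) - F(\Qcal_k) = \mathrm{OPT}(B') - F(\Qcal_k)$) yields the cost-weighted progress bound
\begin{align}
F(\Qcal_{k+1}) - F(\Qcal_k) \;\ge\; \frac{\lambda_{m_{k+1}}}{B'}\,\big(\mathrm{OPT}(B') - F(\Qcal_k)\big).
\end{align}
Because the hypothesis guarantees the process does not stop prematurely due to the budget, we may iterate this recursion over all $K$ steps; writing $\delta_k = \lambda_{m_k}$ with $\sum_{k=1}^K \delta_k = B'$, a telescoping/product estimate gives $\mathrm{OPT}(B') - F(\Qcal_K) \le \prod_{k=1}^K\big(1 - \delta_k/B'\big)\,\mathrm{OPT}(B') \le e^{-\sum_k \delta_k/B'}\,\mathrm{OPT}(B') = e^{-1}\mathrm{OPT}(B')$, using $1-x\le e^{-x}$. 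Hence $F(\Qcal_K) \ge (1 - 1/e)\,\mathrm{OPT}(B')$, which is the claim.

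I expect the main obstacle to be the weighted one-step inequality and, in particular, making the telescoping bound honest given that the ``weights'' $\delta_k/B'$ are themselves determined adaptively by the algorithm and need not be equal. The clean textbook statement assumes unit costs; here one must be careful that (a) the comparison set $\Qcal^\star$ has total cost at most $B'$, not at most $B$ — this is why we compare against OPT($B'$) rather than OPT($B$) — and (b) that the budget-feasibility hypothesis is exactly what licenses applying the recursion at every step without a correction term for a truncated or partially-added element (the phenomenon \citet{leskovec2007cost} flagged). I would also double-check the edge cases: the $k=0$ base case where $F(\Qcal_0)=0$, and the degenerate case $\mathrm{OPT}(B') = 0$ (trivial). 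Finally, I would note that the Monte-Carlo surrogate $\widehat{a}_{\text{batch}}$ in \eqref{eq:ac-batch2} is itself an average of per-sample submodular functions, so the entire argument applies verbatim to the quantity the algorithm actually optimizes, not merely to its expectation — worth stating explicitly so the theorem is about Algorithm~\ref{algo:algo} as implemented.
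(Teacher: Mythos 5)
Your proof is correct, but it takes a genuinely different route from the paper's. The paper reduces the weighted problem to the classical cardinality-constrained setting by conceptually splitting each pair $(\x,m)$ into $\lambda_m$ unit-weight copies, arguing that greedy on the copies selects all copies of a chosen item consecutively, and then invoking the standard $(1-1/e)$ guarantee on the expanded ground set. You instead run the direct cost-benefit greedy analysis (in the style of Khuller--Moss--Naor and \citet{leskovec2007cost}): the weighted one-step inequality $\Delta_{k+1}(\x,m) \le \lambda_m \, \Delta_{k+1}(\x_{k+1},m_{k+1})/\lambda_{m_{k+1}}$ summed over an optimal set of total cost at most $B'$, combined with submodularity and monotonicity, gives the recursion $F(\Qcal_{k+1}) - F(\Qcal_k) \ge \frac{\lambda_{m_{k+1}}}{B'}\bigl(\mathrm{OPT}(B') - F(\Qcal_k)\bigr)$, and the product bound $\prod_k (1-\delta_k/B') \le e^{-1}$ closes the argument. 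Your version is arguably the more rigorous of the two: it sidesteps the need to define the objective on the expanded ground set of copies and to justify that each copy retains marginal value exactly $\Delta_{\x,m}/\lambda_m$ (a point the paper treats informally), and it makes explicit exactly where the hypothesis ``no fidelity would exceed the budget'' is used --- namely, to guarantee every element of $\Qcal^\star$ remains a feasible candidate at every greedy step, so the one-step comparison is licensed without a truncation correction. The paper's copy-splitting argument, in exchange, is more intuitive and connects directly to the familiar unweighted statement it cites. Two small points worth making explicit in your write-up: monotonicity of the budget consumed means the feasibility hypothesis at the current step implies it at all earlier steps (so the recursion really does apply at every $k \le K$), and the comparison against $\mathrm{OPT}(B')$ rather than $\mathrm{OPT}(B)$ is exactly what makes $\sum_{(\x,m)\in\Qcal^\star}\lambda_m \le B'$ available in the one-step bound.
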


\begin{corollary}
\label{cor:sm-opt+}
If Weighted-Greedy (Algorithm \ref{algo:algo}) is run until input-fidelity pair $(\x,m)$ that corresponds with the maximal acquisition function $\hat a_{k+1}(\x,m)$ would exceed the budget, it selects that input-fidelity pair anyways (the solution exceeds the budget $B$) and then terminates, the solution obtained is within $(1-1/e)$ of OPT($B$).  
\end{corollary}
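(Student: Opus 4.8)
The plan is to push the classical budgeted-submodular greedy analysis (in the style of Khuller--Moss--Naor / Sviridenko / \citep{leskovec2007cost}) through one extra, over-budget step, and to observe that it is precisely the overshoot that upgrades the guarantee from ``within $(1-1/e)$ of $\mathrm{OPT}(B')$'' in Theorem~\ref{thm:near-sm-opt} to ``within $(1-1/e)$ of $\mathrm{OPT}(B)$.'' Throughout I would write, for a set $S$ of input--fidelity pairs, $\Fcal(S) = \frac{1}{A}\sum_{l=1}^A \bbI\big(\{\y_{m}(\x):(\x,m)\in S\},\, \y_M(\x'_l)\mid \Dcal\big)$ for the Monte-Carlo batch objective of \eqref{eq:ac-batch2} and $c(S)=\sum_{(\x,m)\in S}\lambda_m$ for its cost, and use that $\Fcal$ is monotone and submodular with $\Fcal(\emptyset)=0$ --- mutual information is submodular~\citep{krause2005near}, and a nonnegative sum of submodular functions is submodular --- so that for the optimal feasible set $O$ (with $c(O)\le B$ and $\Fcal(O)=\mathrm{OPT}(B)=:\rho$) and any $S$, $\rho-\Fcal(S)\le \Fcal(O\cup S)-\Fcal(S)\le \sum_{o\in O\setminus S}\big(\Fcal(S\cup\{o\})-\Fcal(S)\big)$.

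Next I would set up the run. Let $e_1,\dots,e_{t+1}$ be the pairs Weighted-Greedy selects in order, the last being the one that would exceed $B$ and is appended anyway; write $S_i=\{e_1,\dots,e_i\}$, $c_i=\lambda_{m_i}$, and $g_i=\Fcal(S_i)-\Fcal(S_{i-1})$. By the definition of $\hat a_{k+1}$ in \eqref{eq:ac-greedy}, at step $i$ the algorithm picks the pair of maximum gain-to-cost ratio, so $g_i/c_i\ge \big(\Fcal(S_{i-1}\cup\{e\})-\Fcal(S_{i-1})\big)/\lambda_{m(e)}$ for every candidate $e$. Combining with the submodularity inequality above gives, for each $i=1,\dots,t+1$,
\[
\rho-\Fcal(S_{i-1}) \ \le\ \frac{g_i}{c_i}\sum_{o\in O\setminus S_{i-1}}\lambda_{m(o)} \ \le\ \frac{g_i}{c_i}\,c(O)\ \le\ \frac{B}{c_i}\,g_i ,
\]
i.e.\ $\rho-\Fcal(S_i)\le (1-c_i/B)\big(\rho-\Fcal(S_{i-1})\big)$; unrolling from $S_0=\emptyset$ yields $\rho-\Fcal(S_{t+1})\le \rho\prod_{i=1}^{t+1}(1-c_i/B)$.

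To finish I would invoke AM--GM together with the over-budget condition. Since $e_{t+1}$ is the pair that ``would exceed the budget,'' $\sum_{i=1}^{t+1}c_i = c(S_{t+1})>B$, while $c_i\le B$ for each $i$ (for $i\le t$ because $c_i\le c(S_i)\le B$, and for $i=t+1$ because Weighted-Greedy only considers fidelities with $\lambda_m\le B$). Hence each factor $1-c_i/B$ lies in $[0,1)$ and
\[
\prod_{i=1}^{t+1}\Big(1-\frac{c_i}{B}\Big)\ \le\ \Big(1-\frac{1}{t+1}\sum_{i=1}^{t+1}\frac{c_i}{B}\Big)^{t+1}\ <\ \Big(1-\frac{1}{t+1}\Big)^{t+1}\ <\ e^{-1},
\]
so $\rho-\Fcal(S_{t+1})<\rho/e$, i.e.\ $\Fcal(S_{t+1})>(1-1/e)\,\mathrm{OPT}(B)$, as claimed. (Theorem~\ref{thm:near-sm-opt} is the identical computation with $B$ replaced by the budget $B'$ actually consumed so far, in which case $\sum_i c_i=B'$ exactly and AM--GM gives $(1-1/n)^n<e^{-1}$ against $\mathrm{OPT}(B')$; it is the strict overshoot that lets us compare to $\mathrm{OPT}(B)$ instead.)

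I expect the main obstacle to be the feasibility bookkeeping, not any inequality. The gain-to-cost comparison in the second paragraph must hold \emph{for every element of the unknown optimal set $O$ at every step}, since otherwise --- as \citep{leskovec2007cost} shows --- the factor can be arbitrarily bad. Making this airtight forces the convention (the ``minor caveat'' after Theorem~\ref{thm:near-sm-opt}) that Weighted-Greedy ranks all pairs with per-item cost at most $B$ rather than at most the \emph{remaining} budget, and that the terminal step appends the top-ranked such pair even though it overshoots; I would need to verify Algorithm~\ref{algo:algo} implements exactly this, and to confirm that $\Fcal$ is genuinely monotone submodular under the Gaussian-posterior approximation DMFAL uses, so that \citep{krause2005near} applies verbatim (in particular that the Monte-Carlo sum and the projection in \eqref{eq:model} preserve submodularity). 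Everything after that is the standard $(1-1/e)$ telescoping.
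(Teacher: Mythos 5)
Your proof is correct, but it takes a genuinely different route from the paper's. The paper disposes of the corollary in two lines by reduction to Theorem~\ref{thm:near-sm-opt}: the extended run consumes some total budget $B^{+}\geq B$, so the theorem (applied as if the budget were $B^{+}$) gives $(1-1/e)$ of OPT($B^{+}$), and OPT is monotone in the budget, so OPT($B^{+}$) $\geq$ OPT($B$). You instead give a self-contained Khuller--Moss--Naor-style analysis that never invokes the theorem: the per-step ratio rule plus monotone submodularity yields the recursion $\rho-\Fcal(S_i)\leq(1-c_i/B)\,(\rho-\Fcal(S_{i-1}))$, and the overshoot $\sum_{i=1}^{t+1}c_i>B$ drives the unrolled product below $e^{-1}$ via AM--GM. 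The paper's route buys brevity and modularity; yours buys a direct comparison against OPT($B$) that bypasses both the detour through OPT($B^{+}$) and the ``unit-weight copies'' device underlying the proof of Theorem~\ref{thm:near-sm-opt}, and it surfaces explicitly a hypothesis both arguments need but the paper leaves implicit: at every step, including the terminal overshooting one, the argmax must range over all input--fidelity pairs of cost at most $B$, not merely those fitting the \emph{remaining} budget as Algorithm~\ref{algo:algo} is literally written --- otherwise the comparison against elements of the optimal set $O$ breaks and the ratio greedy can degrade arbitrarily, exactly the failure mode of \citet{leskovec2007cost}. The assumptions you flag as needing verification (monotone submodularity of the Monte-Carlo mutual-information objective under the Gaussian posterior approximation, and the continuous ground set $\Omega\times\Mcal$) are taken for granted by the paper's proof as well, so they do not count against your argument specifically.
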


We next sketch the main idea of how to prove the main result.  
Adding a new fidelity-input pair $(m, \x)$ gives an increment of learning benefit $\Delta_j = \bbI\left(\Ycal_k \cup \{\y_m(\x)\}, \y_M(\x'_l)|\Dcal\right) - \bbI\left(\Ycal_k, \y_M(\x'_l)|\Dcal\right)$. Since we need to trade off to the cost $\lambda_m$, we can view there are ${\lambda_m}$ independent copies of $\x$ (for the particular fidelity $m$), and adding each copy gives $\frac{\Delta_j}{\lambda_{m}}$ benefit. We choose the optimal $\x$ and $m$ that maximizes the benefit $\frac{\Delta_j}{\lambda_{m}}$. Since all the $\lambda_m$ copies of $\x$ have the equal, biggest benefit (among all possible choices of inputs in $\Omega$ and fidelities in $\Mcal$), we choose to add them first (greedy strategy), which in total gives $\Delta_j$ benefit -- their benefit does not diminish as each copy is added.  Via dividing by $\lambda_m$ and considering the copies, which each have unit weight, we can apply the standard submodular optimization analysis obtaining $(1-1/e)$OPT, at least until we encounter the budget constraint.


\subsection{Algorithm Complexity}
 The time complexity of our weighted greedy optimization is $\Ocal(\frac{B}{\lambda_1} MG)$ where $\lambda_1$ is the cost for the lowest fidelity, $G$ is the cost of the underlying numerical optimization (\eg L-BFGS) and acquisition function computation (detailed in \citep{li2020deep}). 
  The space complexity is $\Ocal(\frac{B}{\lambda_1}(r+d+1))$, which is to store at most $B/\lambda_1$ pairs of input locations and fidelities, and their corresponding outputs (\ie the querying results). Therefore, both the time and space complexities are linear to the budget $B$.

\begin{algorithm}[t]
	\small 
	\caption{Weighted-Greedy( $\{\lambda_m\}$, budget $B$)}          
	\label{alg:greedy}                           
	\begin{algorithmic}[1]                    
		\STATE $k \leftarrow 0, \Qcal_k \leftarrow \emptyset,  C_k \leftarrow 0$
		\WHILE {$C_k  \le B$}
		\STATE Optimize the weighted incremental acquisition function in \eqref{eq:ac-greedy}:  
		\[
		(\x^*, m^*) = \argmax_{\x \in \Omega, 1\le m \le B -C_k } \hat{a}_{k+1}(\x, m)
		\] 
		 \IF {Infeasible}
		 \STATE break 
		 \ENDIF
		 \STATE $k \leftarrow k + 1$
		\STATE $\Qcal_k \leftarrow \Qcal_{k-1} \cup \{(\x^*, m^*)\}$
		\STATE  $C_k \leftarrow C_{k-1} + \lambda_{m^*}$
		\ENDWHILE
		\STATE Return  $\Qcal_k$
	\end{algorithmic}\label{algo:algo}
\end{algorithm}


\section{Related Work}\label{sect:rel}
As an important branch of machine learning, active learning has been studied for a long time~\citep{balcan2007margin,settles2009active,balcan2009agnostic,dasgupta2011two,hanneke2014theory}. While many prior works develop active learning algorithms for kernel based models, \eg \citep{schohn2000less,tong2001support,joshi2009multi,krause2008near,li2013adaptive,huang2010active}, recent research focus has transited to  deep neural networks. \citet{gal2017deep} used Monte-Carlo (MC) Dropout~\citep{gal2016dropout} to generate posterior samples of the neural network output, based on which a variety of acquisition functions, such as  predictive entropy and  Bayesian Active Learning by Disagreement (BALD)~\citep{houlsby2011bayesian}, can be calculated and optimized  to query new examples in active learning. This approach has been proven successful in image classification. \citet{kirsch2019batchbald} developed BatchBALD,  a greedy approach that incrementally selects a set of unlabeled images under the BALD principle and issues batch queries to improve the active learning efficiency. They show that the batch acquisition function based on BALD is submodular, and hence the greedy approach achieves a $1-1/e$ approximation. Other  works along this line includes  \citep{geifman2017deep,sener2018active} based on core-set search, \citep{gissin2019discriminative,ducoffe2018adversarial} based on adversarial networks or samples, \citep{ash2019deep} based on the uncertainty evaluated in the gradient magnitude, \etc

Different from most methods, DMFAL~\citep{li2020deep} conducts optimization-based, rather than pool-based active learning. That is, they optimize the acquisition function in the entire domain rather than rank a set of pre-collected examples to label. It is also related to Bayesian experimental design~\citep{kleinegesse2020bayesian}. In addition, DMFAL for the first time automatically queries examples at different fidelities, and integrates these examples in a deep multi-fidelity model to improve the active learning efficiency while reducing the cost --- which is critical in physical simulation and engineering design.  The pioneer work of \citet{settles2008active}
empirically studies how the human labeling cost can vary in practical active learning, but does not provide a scheme to  identify multi-fidelity queries. Our work is an extension of  \citep{li2020deep} to generate a batch of multi-fidelity queries so as to reduce the query correlations, improve the diversity and quality of the training examples, while respecting a given budget constraint. A counterpart in the Bayesian optimization domain is the recent work BMBO-DARN~\citep{li2021batch}, which considers batch multi-fidelity queries  for optimizing a black-box function. From the high-level view, BMBO-DARN and our work employ a similar interleaving procedure,  \ie determining a new batch of queries by optimizing an acquisition function, issuing the queries to collect new examples, and updating the surrogate model. However, both the learning setting and acquisition function are different. More important,  we consider the budget constraint while BMBO-DARN does not. Thereby, the computation and optimization techniques of the two works are totally different. The BMBO-DARN uses Hamiltonian Monte-Carlo samples of the single  function output prediction and constructs empirical covariance matrices to compute the acquisition function, while our method and~\citep{li2020deep} use the multi-variate delta method and matrix identities to overcome the challenge of the high-dimensional outputs. BMBO-DARN uses alternating optimization to search over multiple fidelities, while our work develops a weighted greedy algorithm with additional theoretical guarantees to respect the budget constraint.


\section{Experiment}
\subsection{Solving Partial Differential Equations}\label{sect:pde}
We first evaluated \ours in several benchmark tasks of computational physics, \ie predicting the solution fields of  partial differential equations (PDEs), including \textit{Heat}, \textit{Poisson}'s, and  \textit{Burgers}'   equations~\citep{olsen2011numerical}. A numerical solver was used to collect the training examples. High-fidelity examples were generated by running the solver with dense meshes, while low-fidelity examples by coarse meshes.  The dimension of the output is the number of the grid points. For example, a $50 \times 50$ mesh means the output dimension is  $2,500$. We provided two-fidelity queries for each PDE, corresponding to $16 \times 16$ and $32 \times 32$ meshes. We also tested three-fidelity queries for Poisson's equation, with a $64 \times 64$ mesh to generate examples at the third fidelity. We denote the three-fidelity setting by Poisson-3. The input comprises of the PDE parameters and/or boundary/initial condition parameters. The details are provided in~\citep{li2020deep}. We ran the solver at each fidelity for many times, and computed the average running time. We normalized the average running time to obtain the querying cost at each fidelity, $\lambda_1 = 1$, $\lambda_2 = 3$ and $\lambda_3 = 10$. 
To collect the initial training dataset for active learning, we generated $10$ fidelity-1 examples and $2$ fidelity-2 examples for in the two-fidelity setting, and $10$, $5$, and $2$ examples of fidelity-1, 2, 3, respectively, for the three-fidelity setting. 
The training inputs of the initial dataset were uniformly sampled from the domain. To evaluate the prediction accuracy, for each PDE, we randomly sampled $500$ inputs, calculated the ground-truth outputs from a much denser mesh: $128 \times 128$ for Burger's and Poisson's and $100 \times 100$ for Heat equation. The predictions at the highest fidelity were then interpolated to these denser meshes~\citep{zienkiewicz1977finite} to evaluate the error. 

\noindent\textbf{Competing Methods.} Since currently there is not any budget-aware, batch multi-fidelity active learning approach (to our knowledge), for comparison, we first made a simple extension of the state-of-the-art multi-fidelity active learning method, \dmfal~\citep{li2020deep}. Specifically, to obtain a batch of queries, we ran \dmfal as it is, namely, each step acquiring one example  by maximizing \eqref{eq:ac-single} and then retraining the model, until the budget is exhausted or no new queries can be issued (otherwise the budget will be exceeded).  We denote this method by (i) \texttt{\dmfalbc}. Note that it is still sequentially querying and training inside each batch, but respects the budget. Next, to confirm the effectiveness of the proposed new acquisition function  ~\eqref{eq:new-acqn} (based on which, we propose our batch acquisition function~\eqref{eq:ac-batch}), we ran  active learning in the same away as \texttt{\dmfalbc}, except the acquisition function is replaced by $\frac{{a}_s(m, \x)}{\lambda_{m}}$ where $a_s$ is defined by \eqref{eq:new-acqn}. To compute $a_s$, we used an Monte-Carlo approximation similar to~\eqref{eq:ac-batch2}, where the number of samples $A$ is $20$.  We denote this method by (ii) \texttt{MFAL-BC}. In addition, we compared with (iii) \texttt{\dmfalbc-RF}, which follows the running of \texttt{\dmfalbc}, but each time, it randomly selects a fidelity $m$, then maximize the mutual information $\bbI\big(\y_m(\x), \y_M(\x)|\Dcal\big)$ ---  the numerator of \eqref{eq:ac-single} --- to identify the corresponding input.  Similarly, we tested (iv) \texttt{MFAL-BC-RF}, which follows the execution of \texttt{MFAL-BC}, but each time, it randomly selects a fidelity $m$ and maximizes $a_s(m, \x)$. Again $a_s$ is computed by the Monte-Carlo approximation. For all these methods, we used L-BFGS for the input optimization. Finally, we tested (v) \texttt{Batch-FR-BC}, which randomly samples both the fidelity and input at each step (\ie fully random), until the budget is used up or no more fidelities are available. Then the  batch of the acquired examples were added into the dataset altogether to retrain the model. Note that there are other possible acquisition functions, \eg the popular predictive variances and BALD~\citep{houlsby2011bayesian}. Their multi-fidelity versions have been tested and compared against \dmfal in ~\citep{li2020deep}, and turns out to be inferior. Hence, we did not test their possible variants/extensions in our paper. 
\begin{figure*}[t]
	\centering
	\setlength\tabcolsep{0pt}
	\includegraphics[width=0.7\textwidth]{./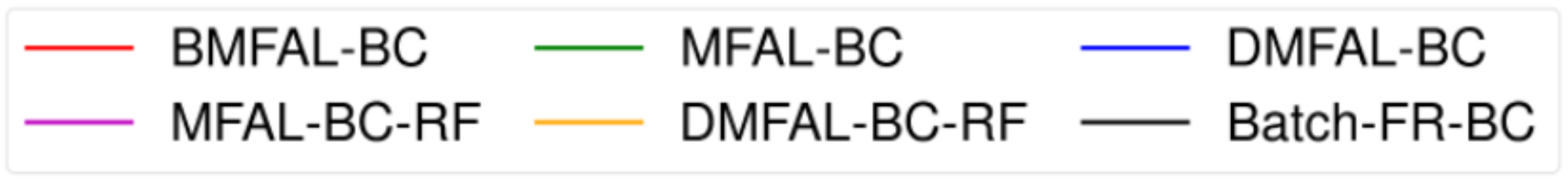}
	\begin{tabular}[c]{cc}
		\setcounter{subfigure}{0}
		\begin{subfigure}[t]{0.5\textwidth}
			\centering
			\includegraphics[width=\textwidth]{./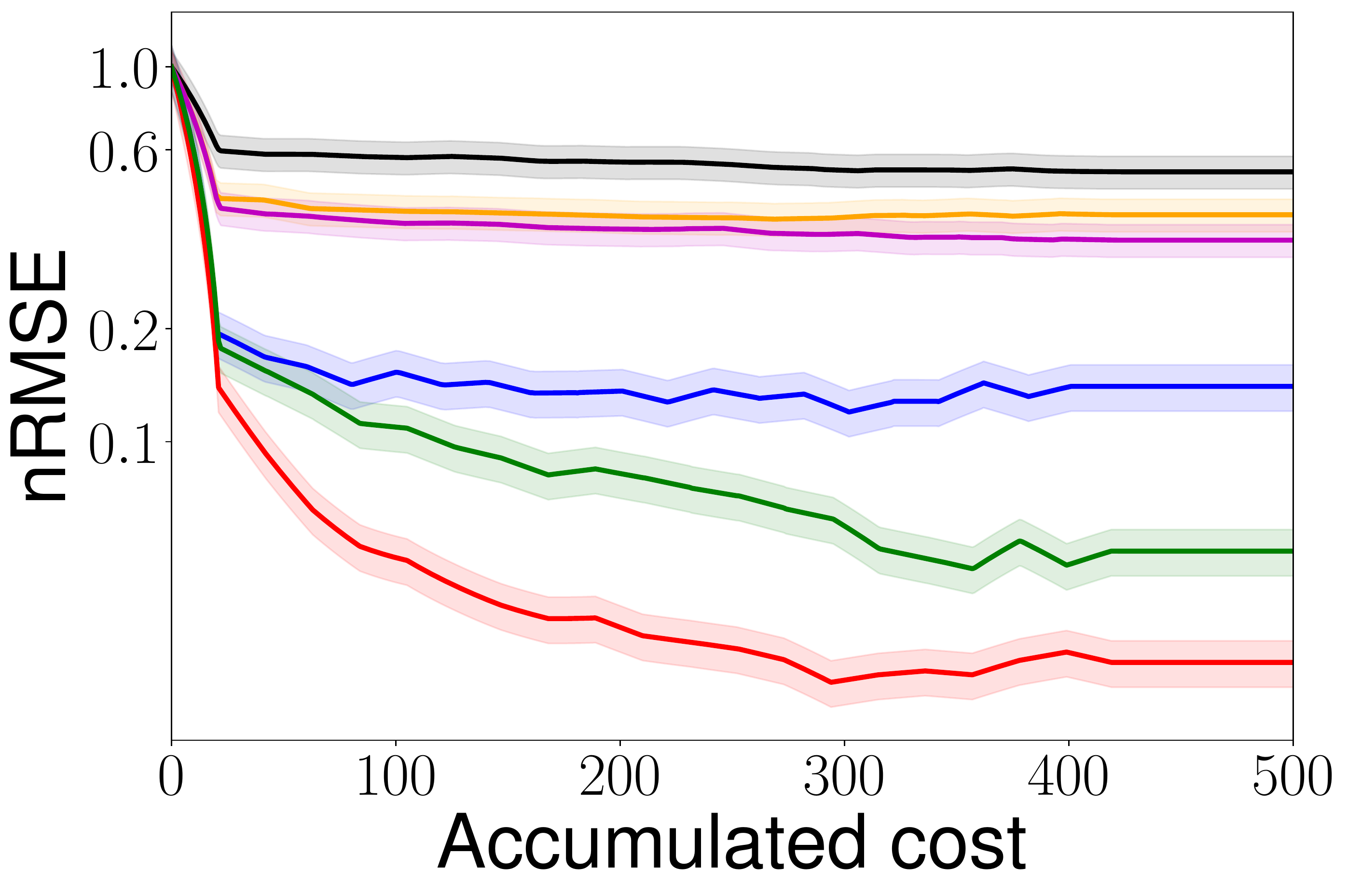}
			\caption{\small \textit{Heat}}
		\end{subfigure}
		&
		\begin{subfigure}[t]{0.5\textwidth}
			\centering
			\includegraphics[width=\textwidth]{./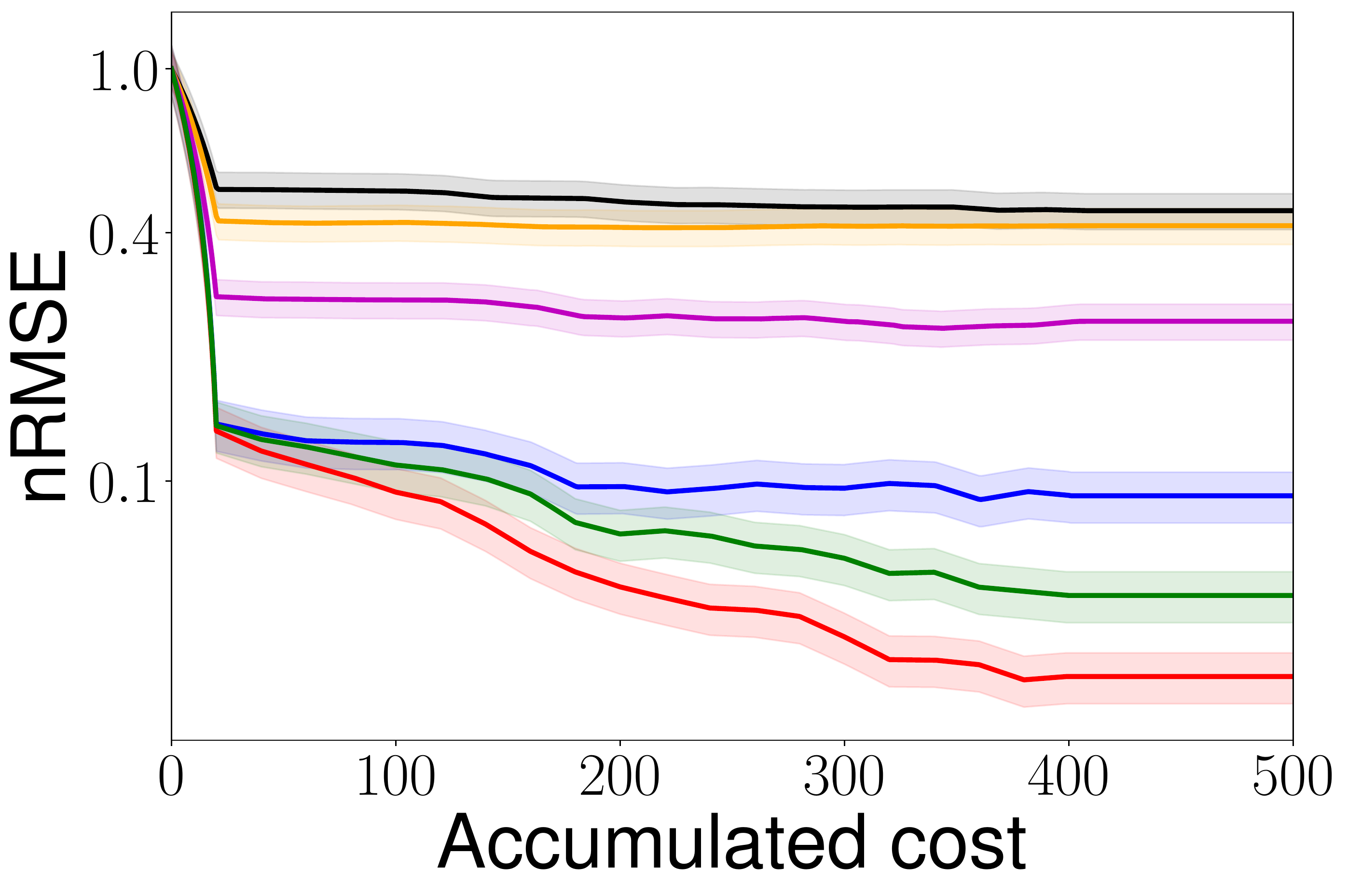}
			\caption{\small \textit{Poisson}}
		\end{subfigure}
		\\
		\begin{subfigure}[t]{0.5\textwidth}
			\centering
			\includegraphics[width=\textwidth]{./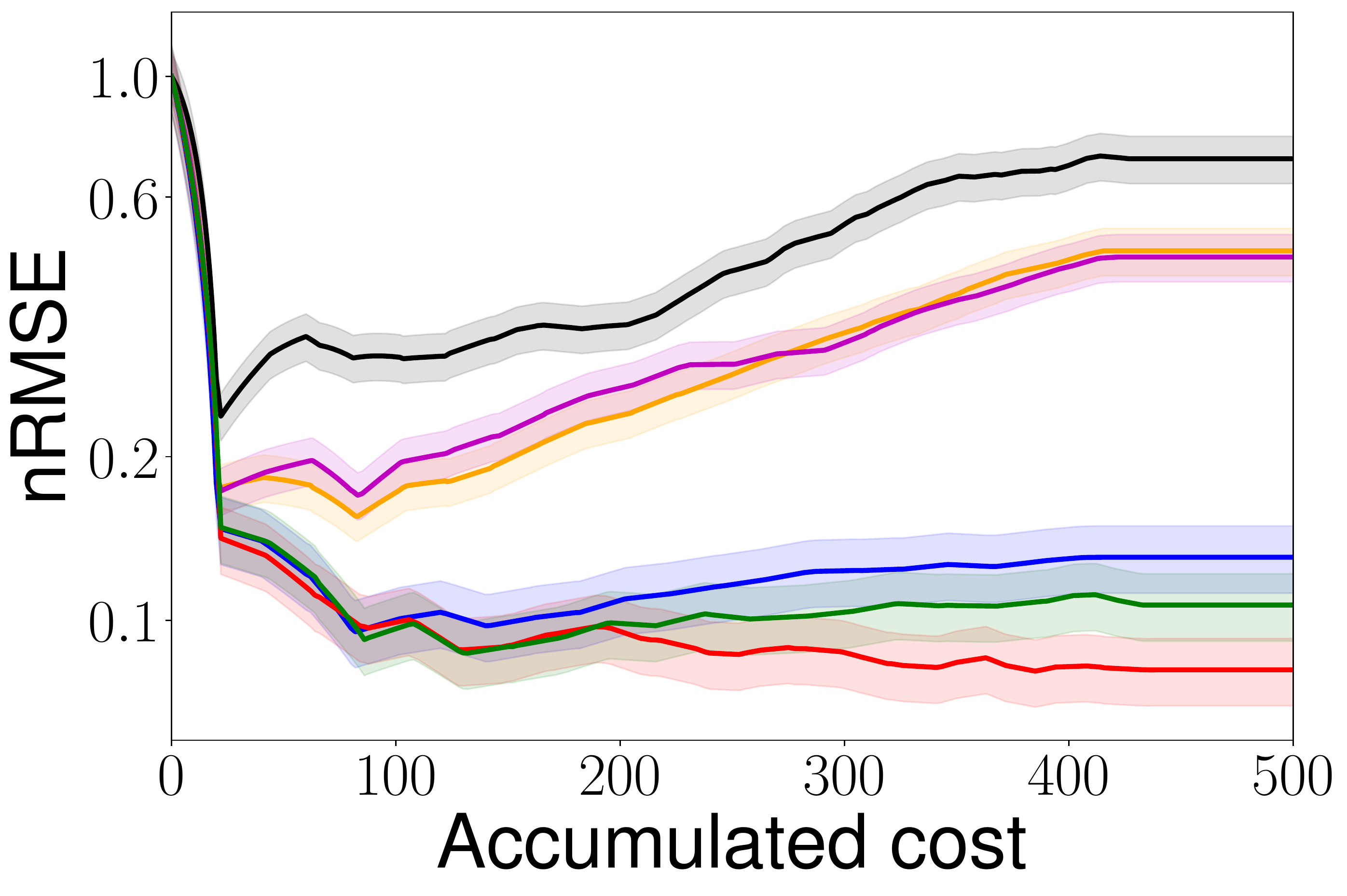}
			\caption{\small \textit{Burgers}}
		\end{subfigure}
		&
		\begin{subfigure}[t]{0.5\textwidth}
		\centering
		\includegraphics[width=\textwidth]{./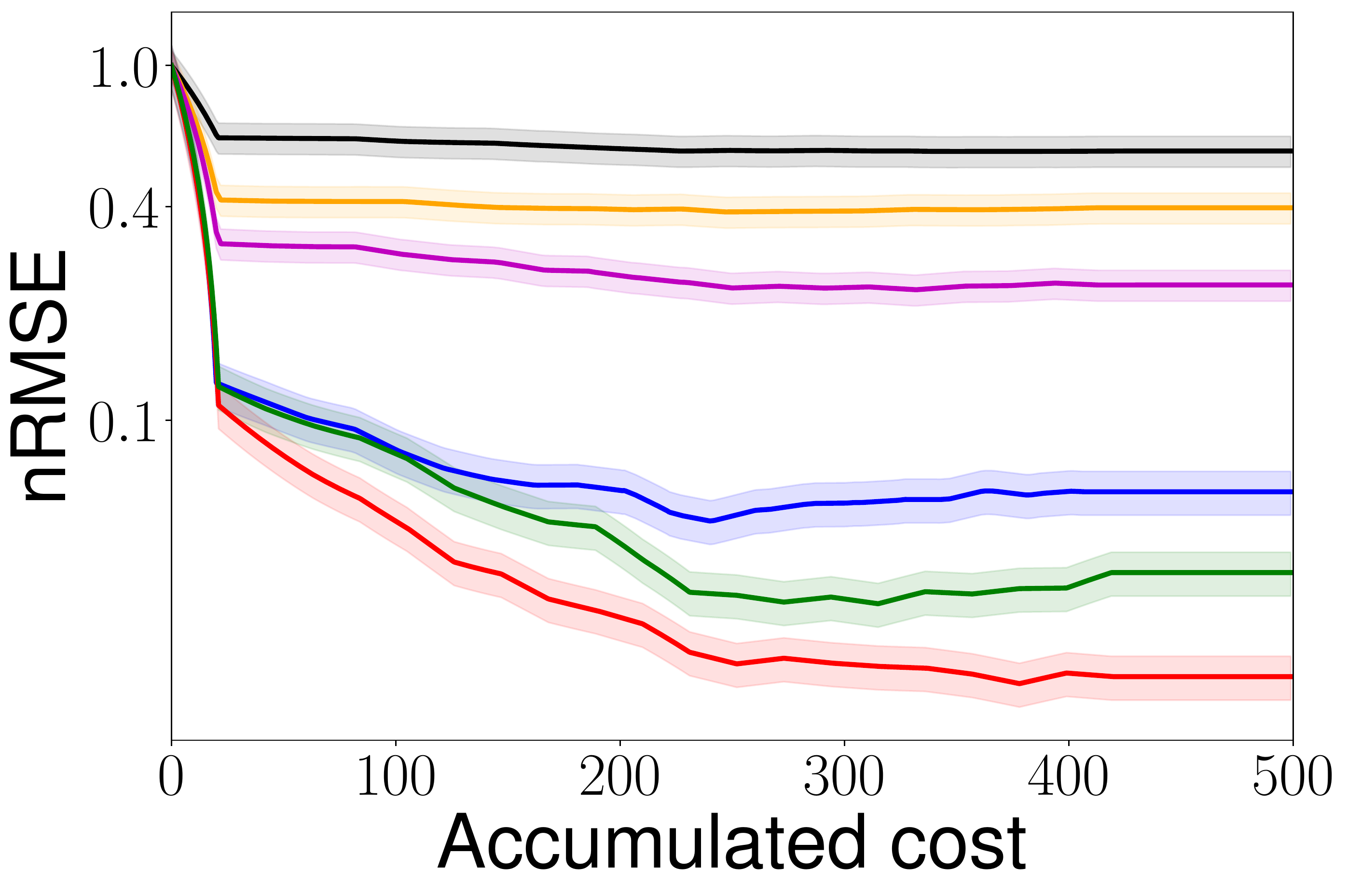}
		\caption{\small \textit{Poisson-3}}
	\end{subfigure}
	\end{tabular}
	\caption{\small Normalized root-mean-square error (nRMSE) \textit{vs.} accumulated data acquiring cost during batch multi-fidelity active learning, with budget $20$ (normalized seconds) per batch.  There are two fidelities in acquiring the examples in (a-c) and three fidelities in (d). The results were averaged over $5$ runs. The shaded region shows the standard deviation.} \label{fig:solving-pde-2fid}
\end{figure*}

\noindent\textbf{Settings and Results.} All the methods were implemented by Pytorch~\citep{paszke2019pytorch}. We followed the same setting as in \citet{li2020deep} to train the deep multi-fidelity model (see Sec. \ref{sect:model}), which employed a two-layer NN at each fidelity, \texttt{tanh} activation,  and the layer width was selected from $\{20, 40, 60, 80, 100\}$ from the initial training data. The dimension of the latent output was 20. The learning rate was tuned from $\{10^{-4}, 5\times 10^{-4}, 10^{-3}, 5 \times 10^{-3}, 10^{-2}\}$.
We set the budget for acquiring each batch to $20$ (normalized seconds), and ran each method to acquire $25$ batches of training examples. We tracked the running performance of each method in terms of the normalized root-mean-square-error (nRMSE). We repeated the experiment for five times, and report the average nRMSE \textit{vs.} the accumulated cost (\ie the summation of the corresponding $\lambda$'s in each acquired example) in Fig. \ref{fig:solving-pde-2fid}.  The shaded region exhibits the standard deviation. As we can see, \ours consistently outperforms all the competing methods by a large margin. At very early stages, all the methods exhibit similar prediction accuracy. This is reasonable, because they started with the same training set. However, along with more batches of queries, \ours shows better performance, and its discrepancy with the baselines is in general growing. In addition, MFAL-BC constantly outperforms \dmfalbc. Since they conduct the same one-by-one querying and training strategy, the improvement MFAL-BC reflects the advantage of our new single acquisition function~\eqref{eq:new-acqn} over the one used in \dmfal. Together these results have demonstrated the advantage of our method. 

\cmt{
\begin{figure*}[t]
	\centering
	\setlength\tabcolsep{0pt}
	\includegraphics[width=0.7\textwidth]{./figs2/legend1.pdf}
	\begin{tabular}[c]{ccc}
		\setcounter{subfigure}{0}
		\begin{subfigure}[t]{0.33\textwidth}
			\centering
			\includegraphics[width=\textwidth]{./figs2/rmse_Heat2.pdf}
			\caption{\small Heat equation}
		\end{subfigure}
		&
		\begin{subfigure}[t]{0.33\textwidth}
			\centering
			\includegraphics[width=\textwidth]{./figs2/rmse_Poisson2.pdf}
			\caption{\small Poisson's equation}
		\end{subfigure}
		&
		\begin{subfigure}[t]{0.33\textwidth}
			\centering
			\includegraphics[width=\textwidth]{./figs2/rmse_Burgers.pdf}
			\caption{\small Burgers' equation}
		\end{subfigure}
	\end{tabular}
	\caption{\small Normalized root-mean-square error (nRMSE) \textit{vs.} accumulated data acquiring cost during batch multi-fidelity active learning, with budget $20$ (normalized seconds) per batch.  There are two fidelities in acquiring the examples. The results were averaged over $5$ runs. The shaded regions exhibit the standard deviation.} \label{fig:solving-pde-2fid}
\end{figure*}

\begin{figure*}[t]
	\centering
	\begin{subfigure}[t]{0.6\textwidth}
		\centering
		\includegraphics[width=\textwidth]{./figs2/rmse_Poisson3.pdf}
	\end{subfigure}
\caption{\small \textit{Poisson3}} \label{fig:solving-pde-3fid}
\end{figure*}
}

\begin{figure*}[t]
	\centering
	\setlength\tabcolsep{0pt}
	\includegraphics[width=0.7\textwidth]{./figs2/legend1.pdf}
	\begin{tabular}[c]{cc}
		\setcounter{subfigure}{0}
		\begin{subfigure}[t]{0.5\textwidth}
			\centering
			\includegraphics[width=\textwidth]{./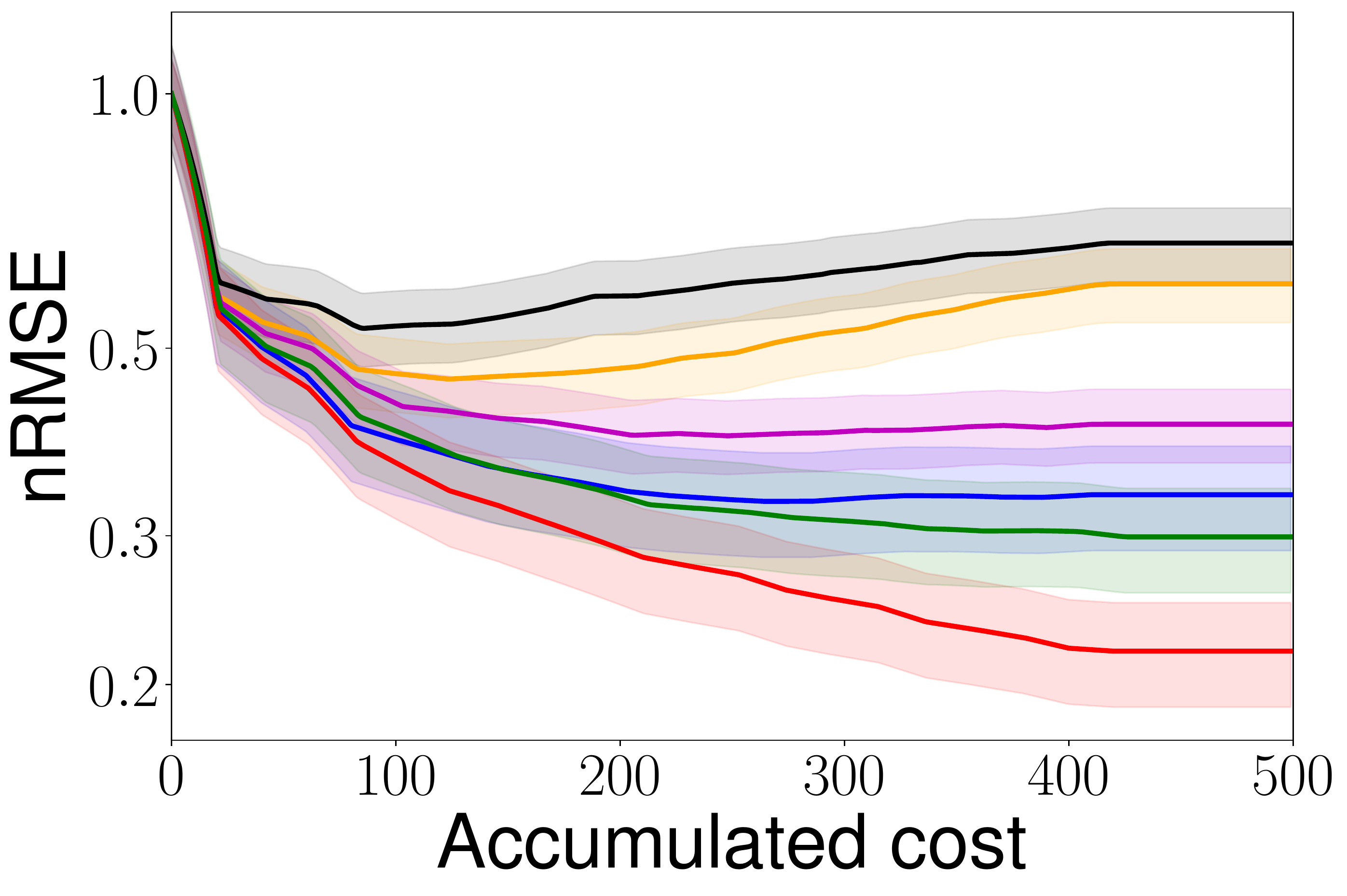}
			\caption{\small \textit{Topology optimization}} \label{fig:tpo}
		\end{subfigure}
		&
		\begin{subfigure}[t]{0.5\textwidth}
			\centering
			\includegraphics[width=\textwidth]{./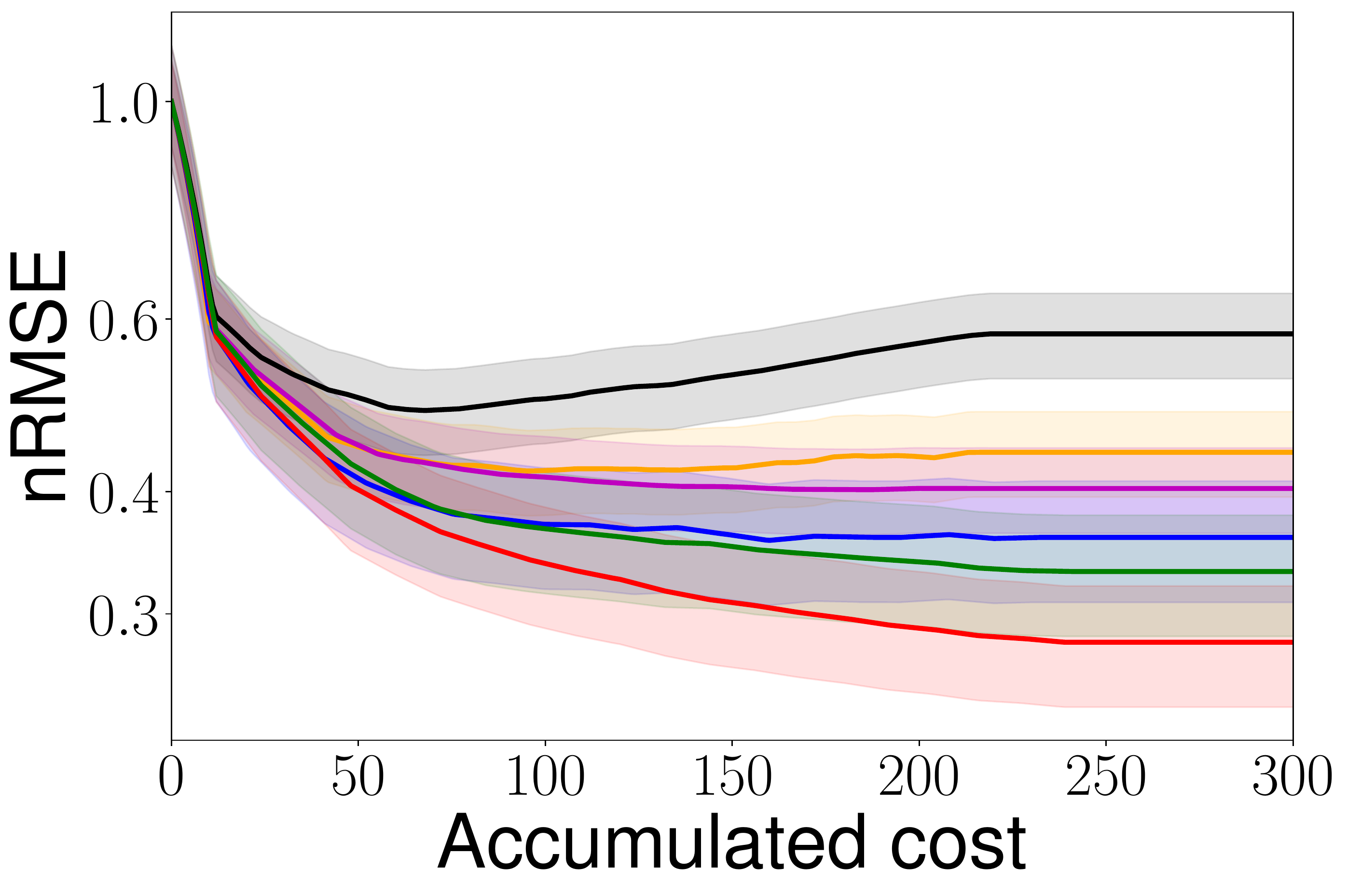}
			\caption{\small \textit{Fluid dynamics}} \label{fig:ns}
		\end{subfigure}
		
	\end{tabular}
	\caption{\small nRMSE \textit{vs.} the accumulated cost in learning to predict topological structures and fulid dynamics. The budget was set to 20 and 10 for (a) and (b), respectively.}
\end{figure*}

\subsection{Topology Structure Optimization}\label{sect:tpo}
Next, we applied our approach in topology structure optimization, which is critical in many manufacturing and engineering design problems, such as 3D printing, bridge construction, and aircraft engine production. A topology structure is used to describe how to allocate the material, \eg metal and concrete, in a designated spatial domain. Given the environmental input, \eg a pulling force or pressure, we want to identify the structure with the optimal property of interest, \eg maximum stiffness. Traditionally, we convert it to a constraint optimization problem, for which we minimize a compliance objective with a total volume constraint ~\citep{sigmund1997design}. Since the optimization often needs to repeatedly run numerical solvers to solve relevant PDEs, the computation is very expensive. We aim to learn a surrogate model with active learning, which can  predict the optimal structure outright  given the environmental input. 

Specifically, our task is to design a linear elastic structure in an L-shape domain discretized in $[0, 1] \times [0, 1]$.  The environmental input is a load at the bottom half right and described by two parameters: angle (in $[0, \frac{\pi}{2}]$) and location (in $[0.5, 1]$). Given a particular load, we aim to find the structure that has the maximum stiffness. To calculate the compliance in the structure optimization, we need to repeatedly apply the Finite Element Method (FEM)~\citep{keshavarzzadeh2018parametric}, where the fidelity is determined by the mesh. In the active learning, the training examples can be acquired at two fidelities. One corresponds to a $50 \times 50$ mesh used in the FEM, and the other $75 \times 75$. The output dimension of the two fidelities is therefore $2,500$ and $5,625$, respectively. The querying cost is the normalized average time to find the optimal structure (with the standard approach), $\lambda_1 = 1$ and $\lambda_2 = 3$. To evaluate the performance, $500$ test structures were randomly generated with a $100 \times 100$ mesh. We interpolated the high-fidelity prediction of each method to the $100 \times 100$ mesh and then evaluated the prediction error.   

At the beginning, we uniformly sampled the input (\ie load angle and location) to collect $10$ structures at the first fidelity and $2$  at the second fidelity, as the initial training set. We then ran all the active learning methods, with budget $20$ per batch, to acquire $25$ batches of examples.   We examined the average nRMSE along with the accumulated cost of acquiring the examples. The results are reported in Fig. \ref{fig:tpo}.  We can see that the prediction accuracy of \ours is consistently better than all the competing methods during the active learning. The improvement becomes larger when more examples are acquired. The results confirm the advantage of \ours.
\begin{figure*}[t]
	\centering
	\setlength\tabcolsep{0pt}
	\includegraphics[width=0.7\textwidth]{./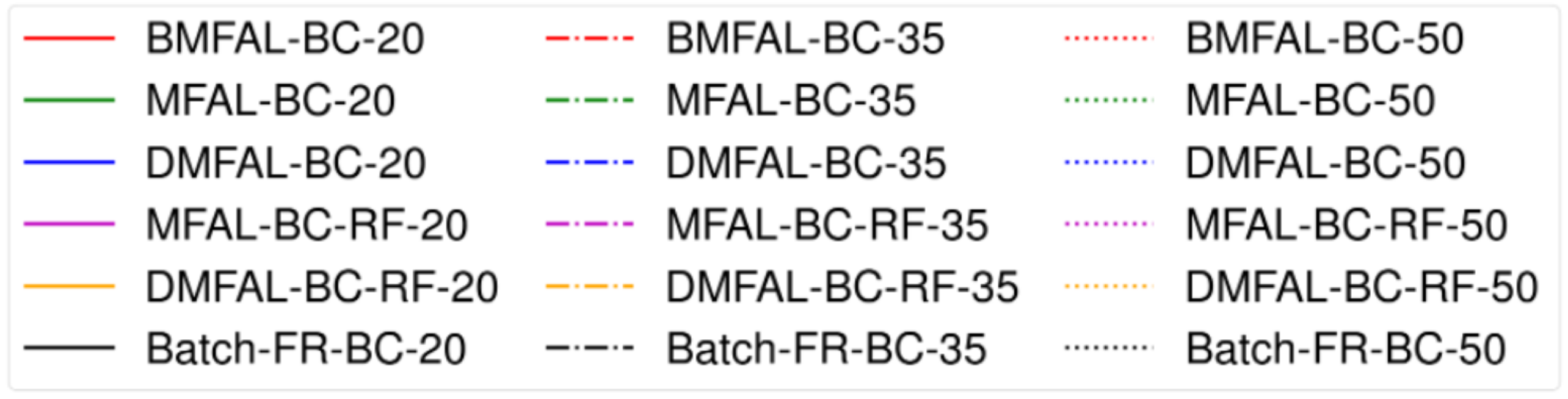}
	\begin{tabular}[c]{ccc}
		\setcounter{subfigure}{0}
		\begin{subfigure}[t]{0.33\textwidth}
			\centering
			\includegraphics[width=\textwidth]{./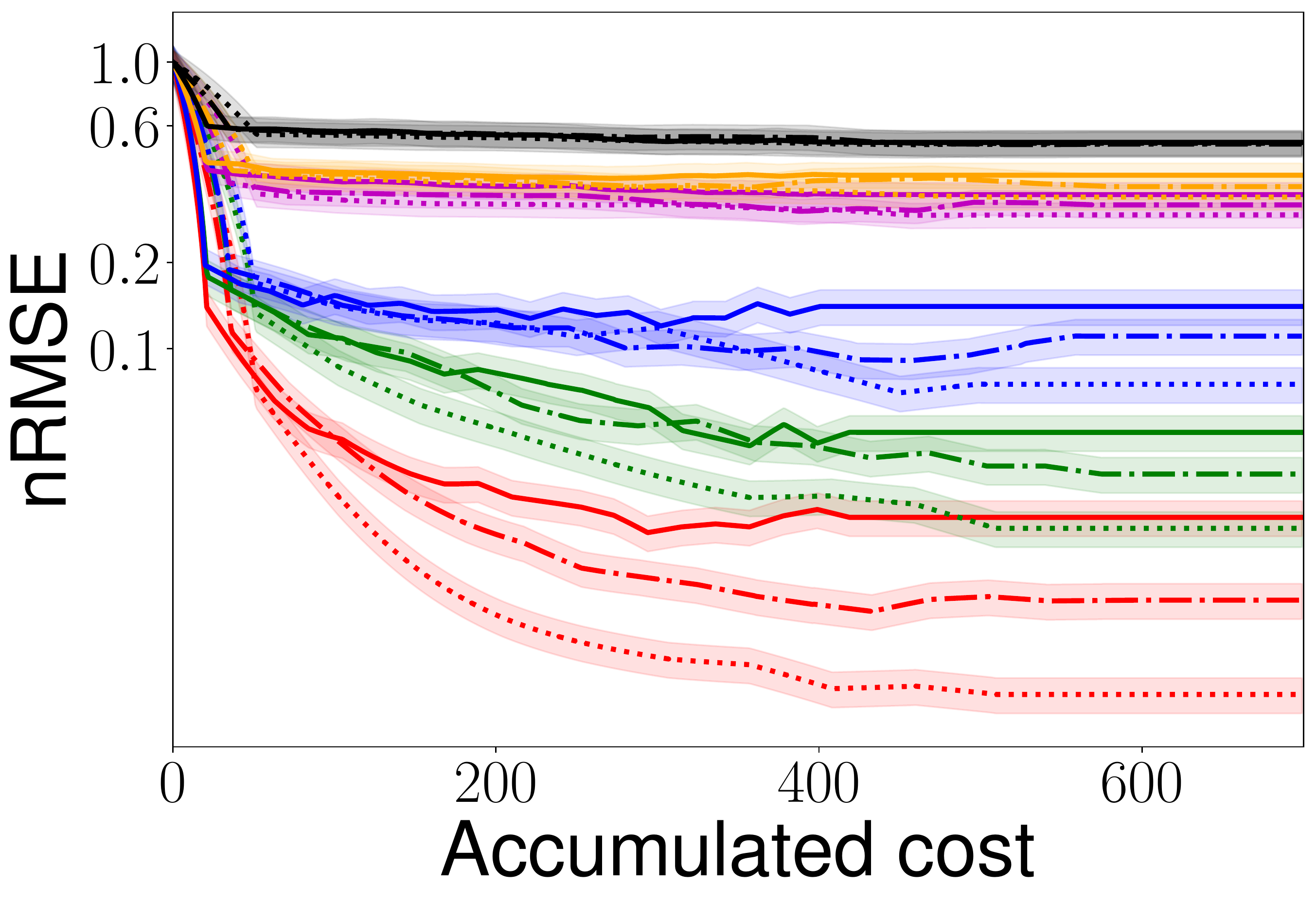}
			\caption{\small \textit{Heat}}
		\end{subfigure}
		&
		\begin{subfigure}[t]{0.33\textwidth}
			\centering
			\includegraphics[width=\textwidth]{./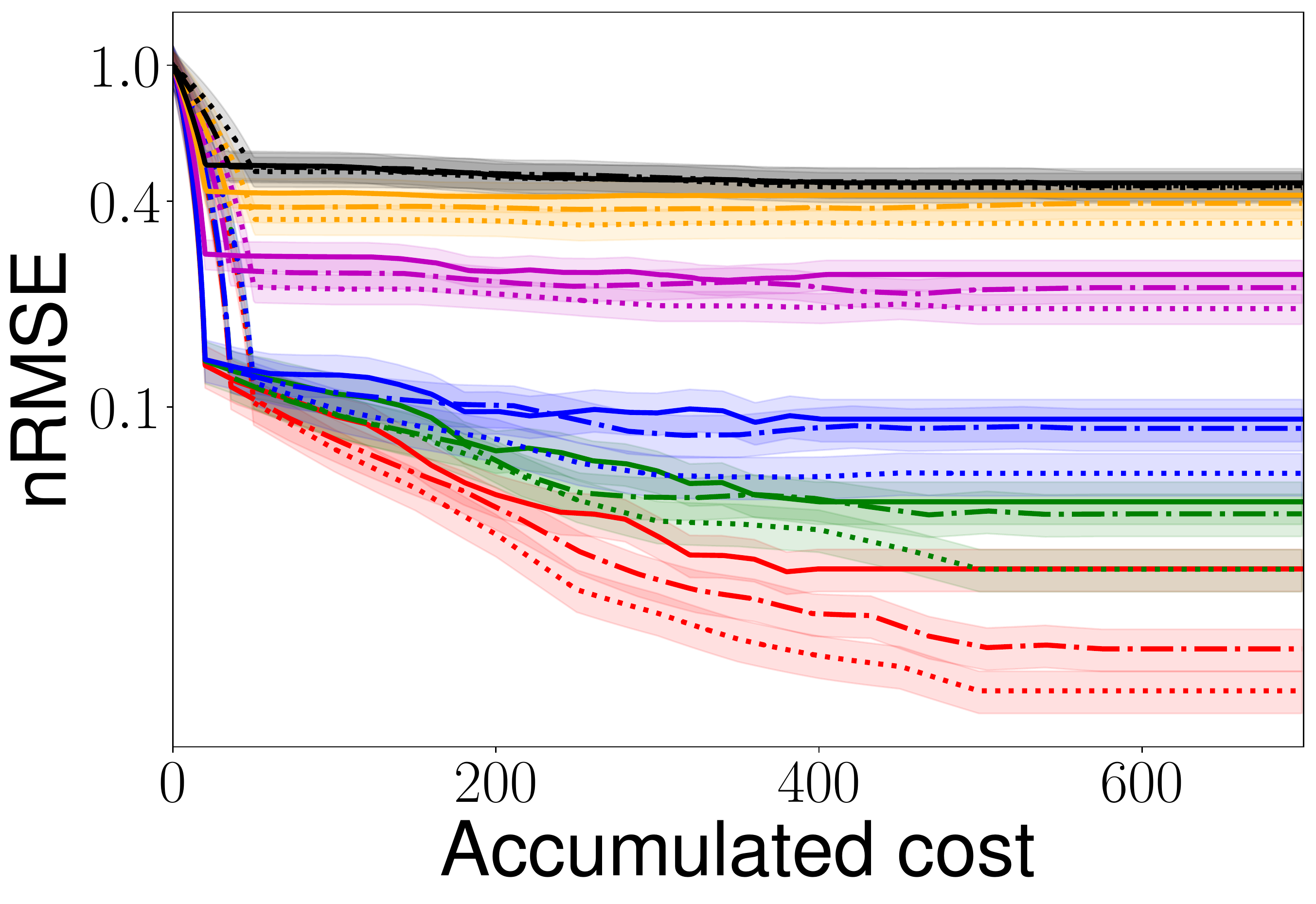}
			\caption{\small \textit{Poisson}}
		\end{subfigure}
		&
		\begin{subfigure}[t]{0.33\textwidth}
			\centering
			\includegraphics[width=\textwidth]{./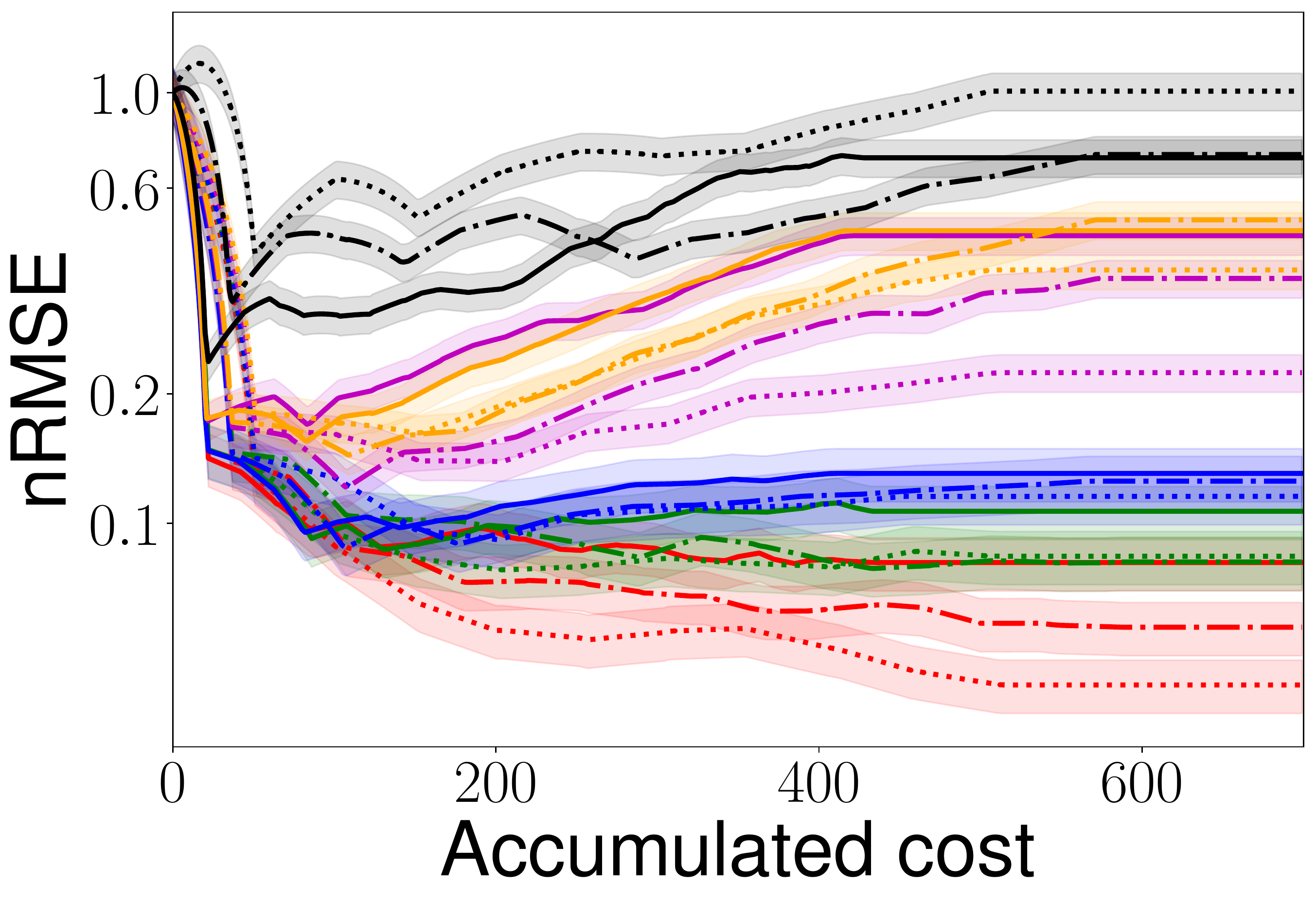}
			\caption{\small \textit{Burgers}}
		\end{subfigure}
	\end{tabular}
	\caption{\small nRMSE \textit{vs.} the accumulated cost under different budgets per batch: $B \in \{20, 35, 50\}$. } \label{fig:budget-varying}
\end{figure*}
\subsection{Predicting Spatial-Temporal Fields of Flows}\label{sect:cfd}
Third, we evaluated \ours in computational fluid dynamics (CFD).  We considered a classical application where the flow is inside a rectangular domain (represented by $[0, 1] \times [0, 1]$), and driven by rotating boundaries with a constant velocity~\citep{bozeman1973numerical}. The velocities of different parts inside the flow will vary differently, and eventually lead to turbulent fluids. To compute the velocity field in the time spatial domain, we need to solve the incompressible Navier-Stokes equations~\citep{chorin1968numerical}, which is known to be computationally challenging, especially under large Reynolds numbers. We considered the active learning task of predicting the first component of the velocity field at $20$ evenly spaced points in the temporal domain $[0, 10]$. The training examples can be queried at two fidelities. The first fidelity uses a $50 \times 50$ mesh in the $[0, 1] \times [0,1]$ spatial domain, and the second fidelity $75 \times 75$. Accordingly, the output dimensions are $50,000$ and $112,500$; the cost per query is $\lambda_1=1$ and $\lambda_2=3$.  The input is five dimensional, including the velocities of the four boundaries and the Reynold number. The details are given in~\citep{li2020deep}. For testing, we computed the solution fields of $256$ random inputs,  using a $128 \times 128$ mesh. We used the cubic-spline interpolation to align the prediction made by each method to the $128 \times 128$ grid, and then calculated normalized RMSE. To conduct the active learning, we randomly generated $10$ and $2$ examples in each fidelity as the initial training set. We set the budget to $10$, and ran each method to acquire 25 batches. 
 We repeated the experiment for five times, and examined how the average nRMSE varied along with the accumulated cost.
 As shown in Fig. \ref{fig:ns}, \ours keeps exhibiting superior predictive performance during the course of active learning. Again, the more examples acquired, the more improvement of \ours upon the competing methods. The results are consistent with the previous experiments. Note that throughout these comparisons,  we focus on the accumulated cost of querying (or generating) new examples, because it dominates the total cost, and in practice is the key bottleneck in learning the surrogate model. For example, in topology optimization (Sec \ref{sect:tpo}) and the fluid dynamic experiment, running a high-fidelity solver once takes 300-500 seconds on our hardware, while the surrogate training takes less than 2 seconds, and our weighted greedy optimization of the batch acquisition function (Algorithm \ref{algo:algo}) takes a few seconds. One can imagine for practical larger-scale problems, the simulation cost, \eg taking hours or even days to generate one example, can be even more dominant and decisive. 

\subsection{Influence of Different Budgets}
Finally, we examined how the budget choice will influence the performance of active learning. To this end, we varied the budget $B$ in $\{20, 35, 50\}$ and tested all the methods for Poisson's, Burger's and heat equations. We used the same two-fidelity settings as in Sec. \ref{sect:pde}. For each budget, we ran the experiment for five times. We show the average nRMSE \textit{vs.} the accumulated cost in Fig. \ref{fig:budget-varying}. As we can see, 
the larger the budget per batch, the better the running performance of \ours. This is reasonable, because a larger budget allows our method to generate more queries in each batch and in the meantime to account for their correlations or information redundancy.  Accordingly, the acquired training examples are overall more diverse and informative. Again, \ours outperforms all the competing methods under every budget. The improvement of \ours is bigger under larger budget choices. This together demonstrates the advantage of batch active learning that takes into account the correlations between queries.

\cmt{
\subsection{Efficiency of Query Generation}
Finally, we examined the efficiency of \ours in generating multi-fidelity queries. The query generation includes two steps: retraining the model and optimizing the fidelity and inputs. Because \ours generates a batch of queries at a time, for a fair comparison, we normalized the query generation time by the corresponding example acquiring time. Note that the more queries provided at a time, the longer to acquire the training examples. To enable a fair comparison with the sequential acquisition methods, we did not collect those training examples in parallel (although this is certainly feasible in practice). 
We compared with all the methods except the random querying strategies.  We examined the inverse of the average query generation time --- \ie the speed --- in all the six tasks (Sec. \ref{sect:pde} to \ref{sect:cfd}). From Fig. \ref{fig:query_gen_time}, we can see that in almost all the cases, \ours-GI and \ours-AR1 are much faster  in producing queries than the sequential acquisition approaches, except that in Fig. \ref{fig:query_gen_time}d and e, \ours-AR1 is comparable to Dropout-latent, and in Fig. 7f, \ours-GI and \ours-AR1 are slower than Dropout-latent.  In most cases, \ours-AR3 is faster  than \dmf and MF-BALD, except in Fig. \ref{fig:query_gen_time} d. In Fig. \ref{fig:query_gen_time} e, the speed of \ours-AR3 is almost the same as \dmf. Therefore,  \ours is overall more efficient than or comparable to the current sequential acquisition methods in generating multi-fidelity queries.

\begin{figure*}[t]
	\centering
	\setlength\tabcolsep{0pt}
	\includegraphics[width=0.7\textwidth]{./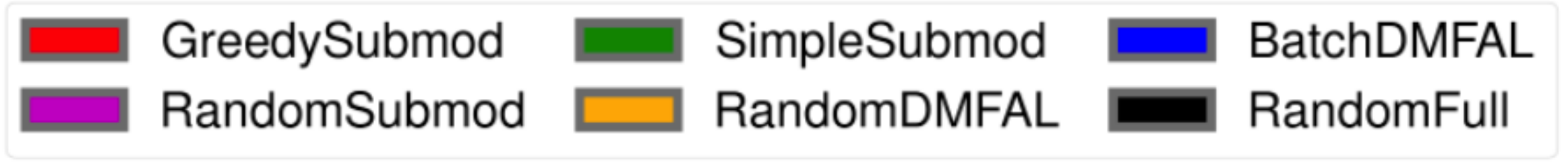}
	\begin{tabular}[c]{ccc}
		\setcounter{subfigure}{0}
		\begin{subfigure}[t]{0.33\textwidth}
			\centering
			\includegraphics[width=\textwidth]{./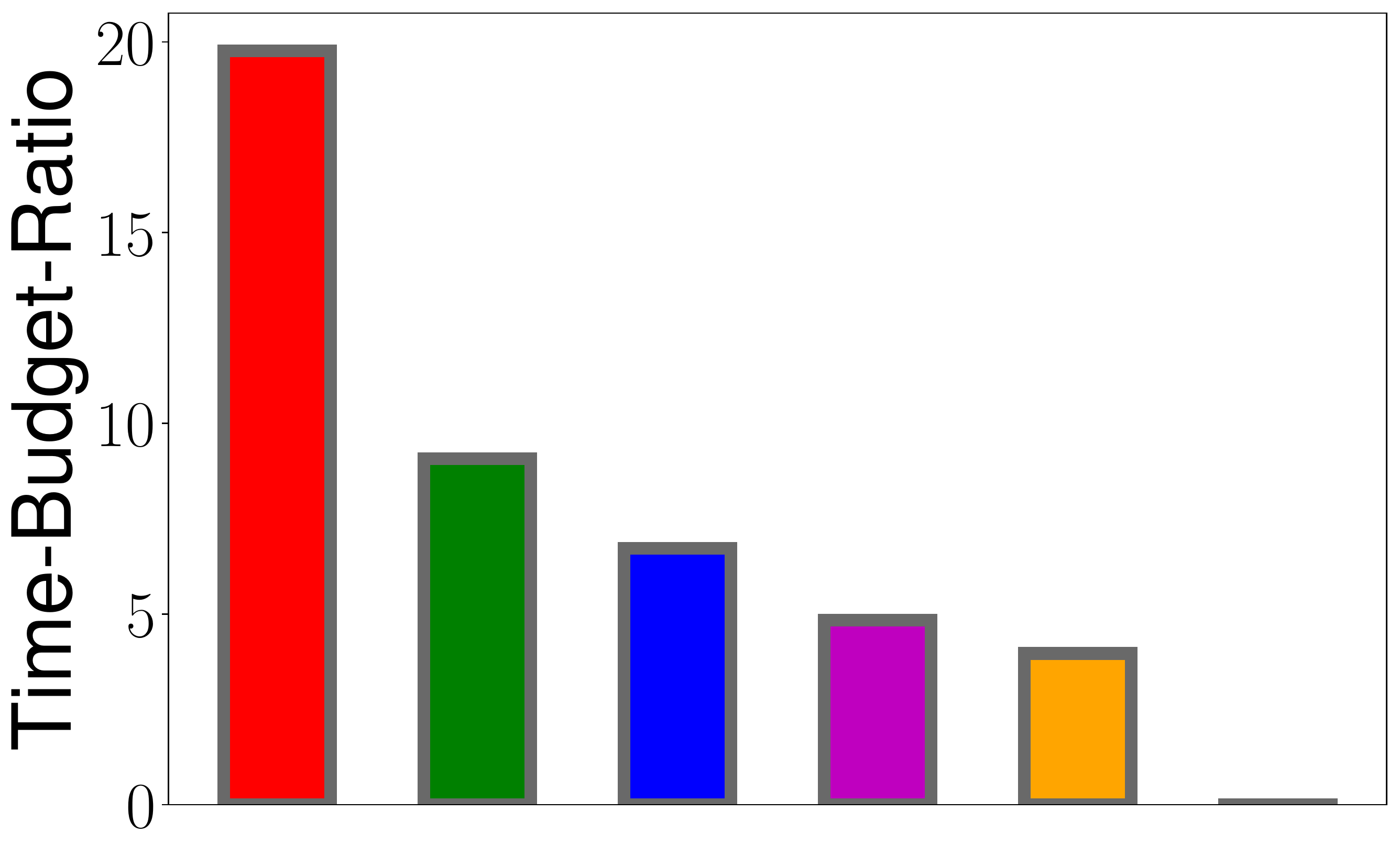}
			\caption{\small \textit{Heat2}}
		\end{subfigure}
		&
		\begin{subfigure}[t]{0.33\textwidth}
			\centering
			\includegraphics[width=\textwidth]{./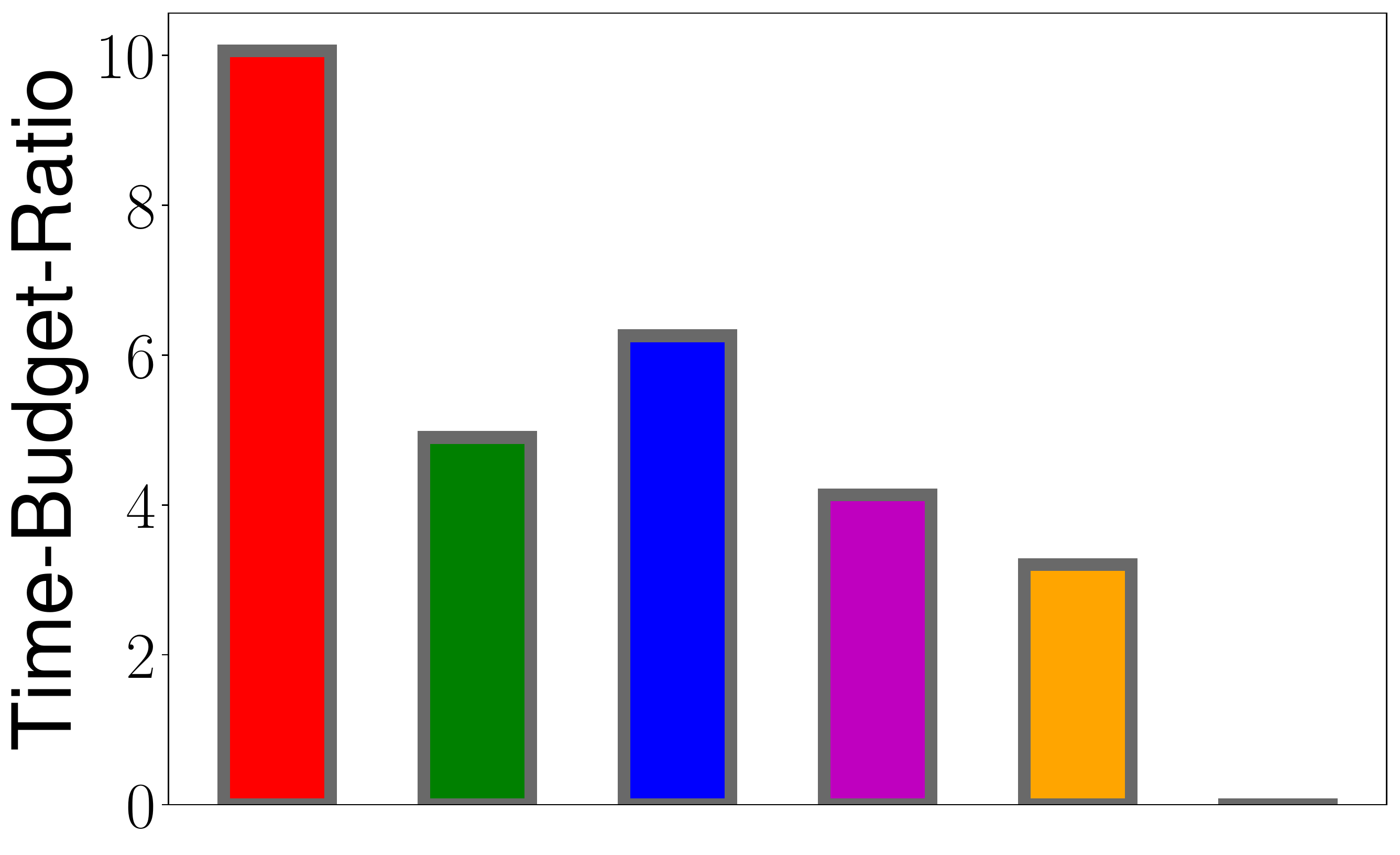}
			\caption{\small \textit{Poisson2}}
		\end{subfigure}
		&
		\begin{subfigure}[t]{0.33\textwidth}
			\centering
			\includegraphics[width=\textwidth]{./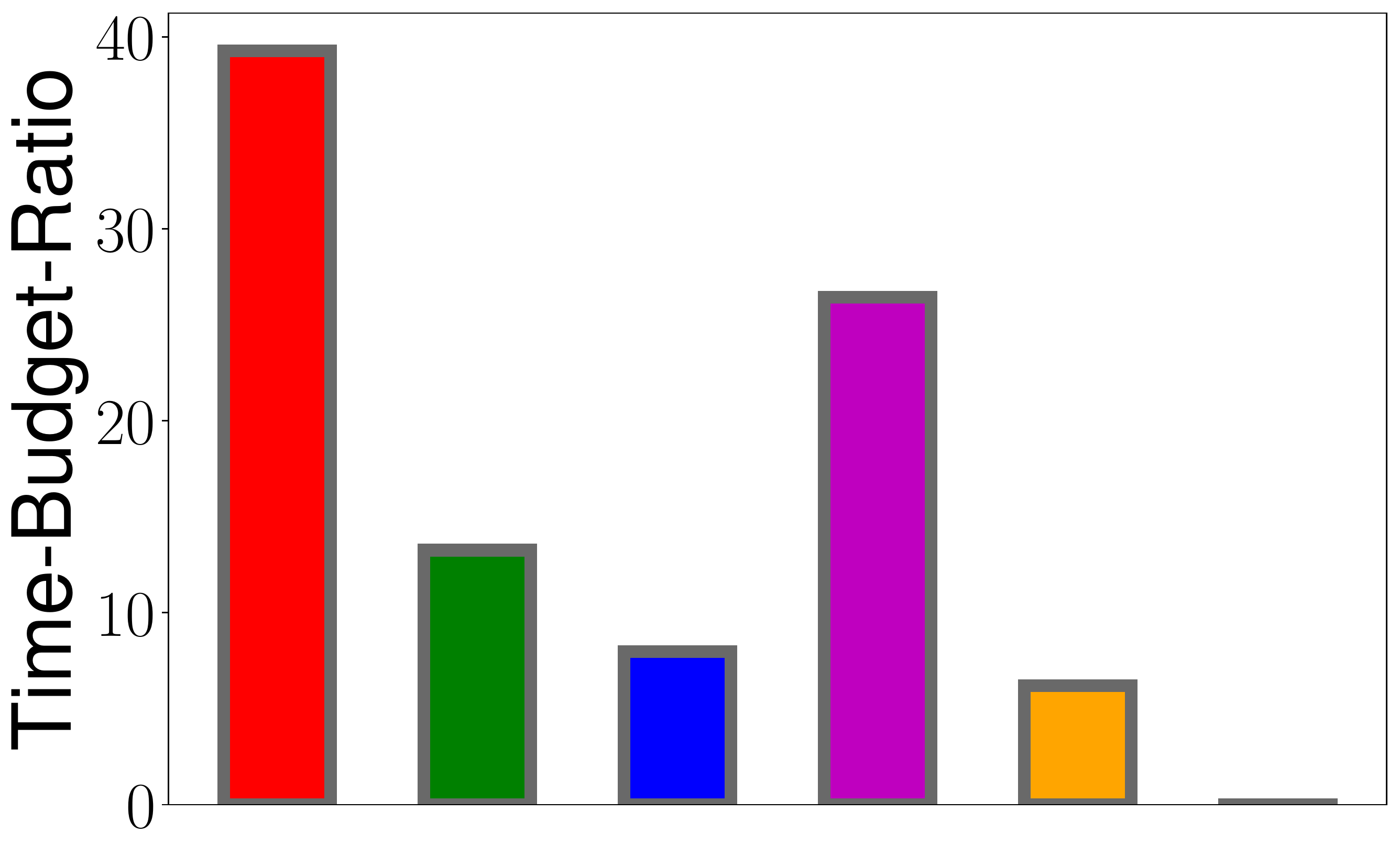}
			\caption{\small \textit{Poisson3}}
		\end{subfigure} 
		\\
		\begin{subfigure}[t]{0.33\textwidth}
			\centering
			\includegraphics[width=\textwidth]{./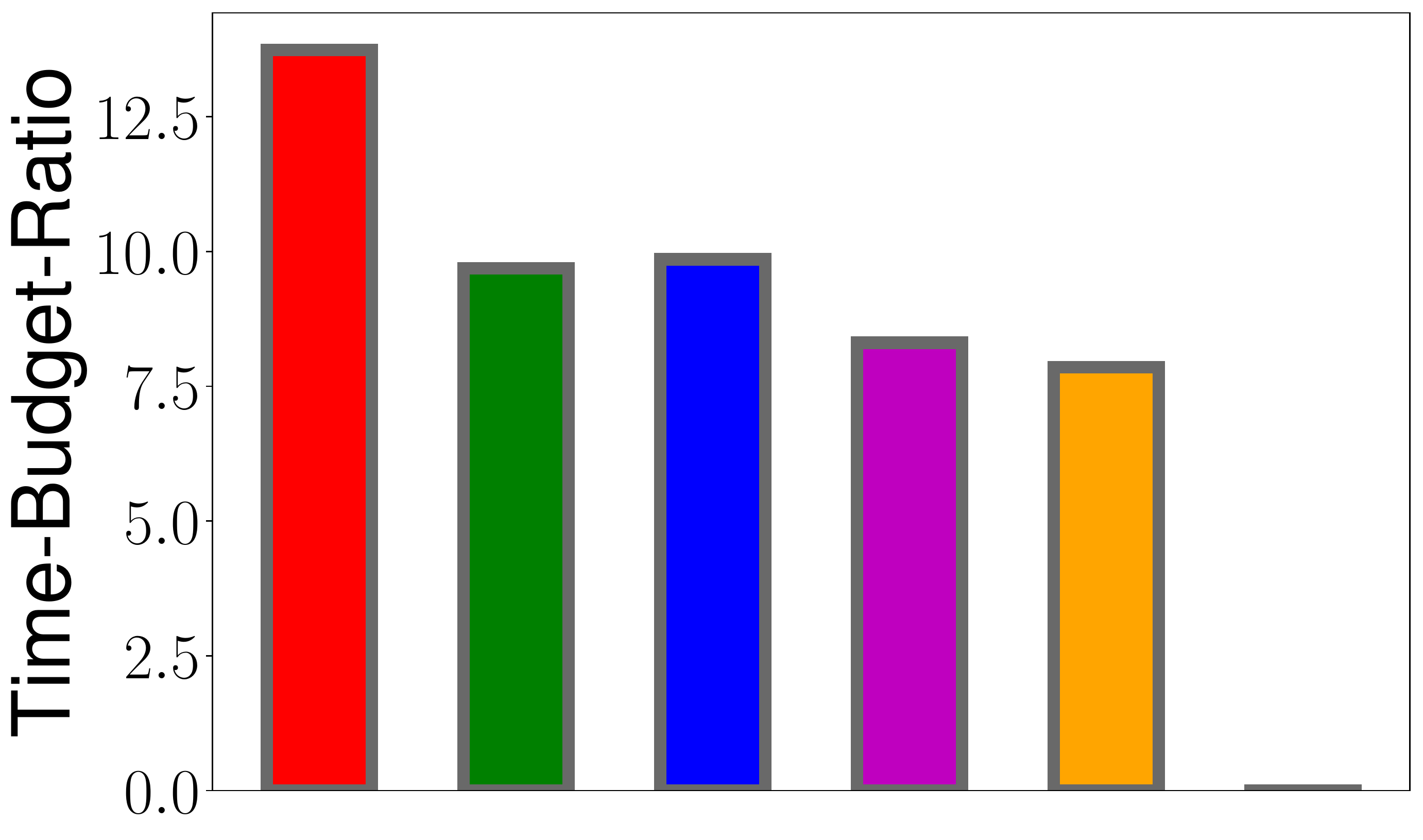}
			\caption{\small \textit{Burgers}}
		\end{subfigure}
		&
		\begin{subfigure}[t]{0.33\textwidth}
			\centering
			\includegraphics[width=\textwidth]{./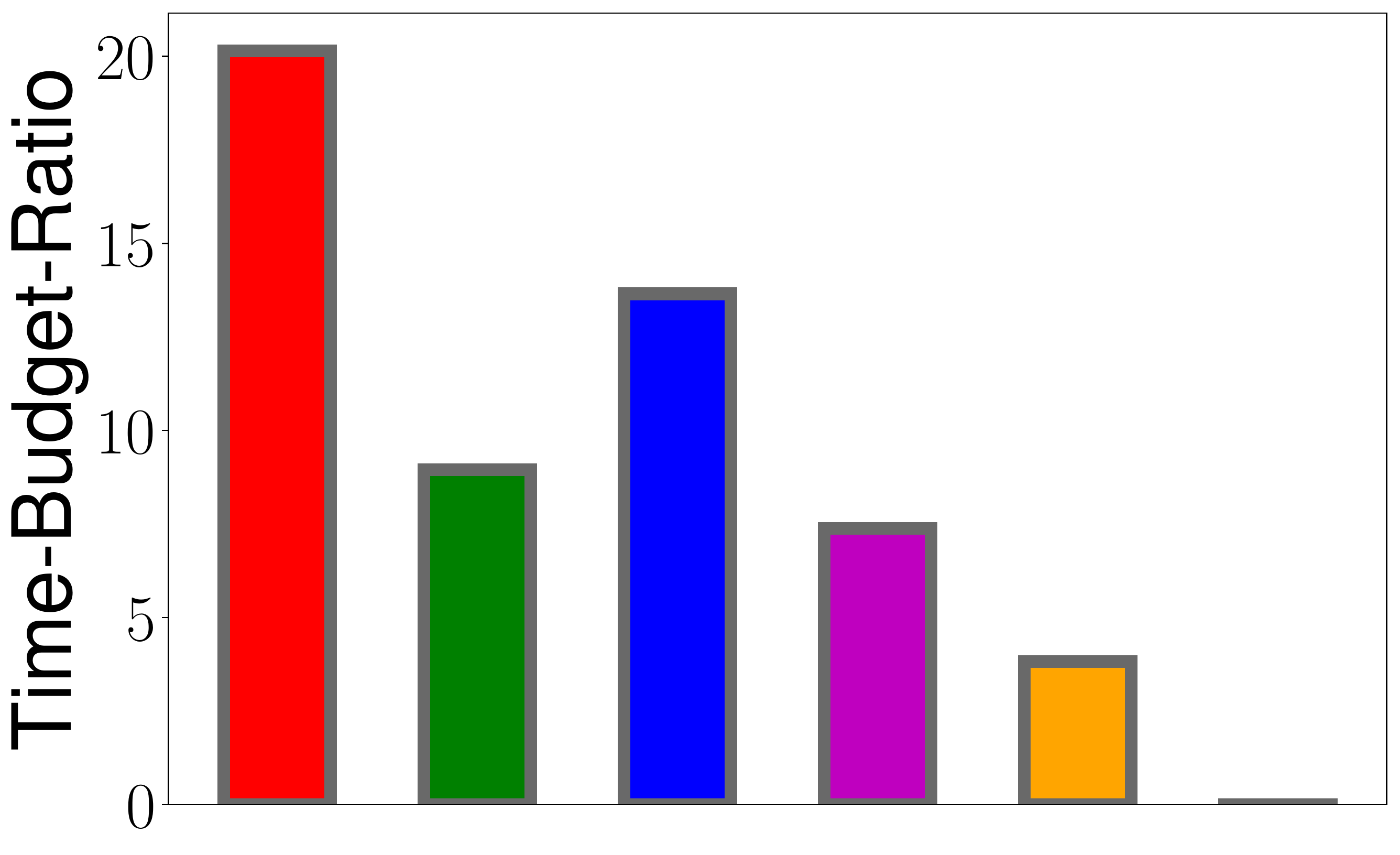}
			\caption{\small \textit{TopOpt}}
		\end{subfigure}
		&
		\begin{subfigure}[t]{0.33\textwidth}
			\centering
			\includegraphics[width=\textwidth]{./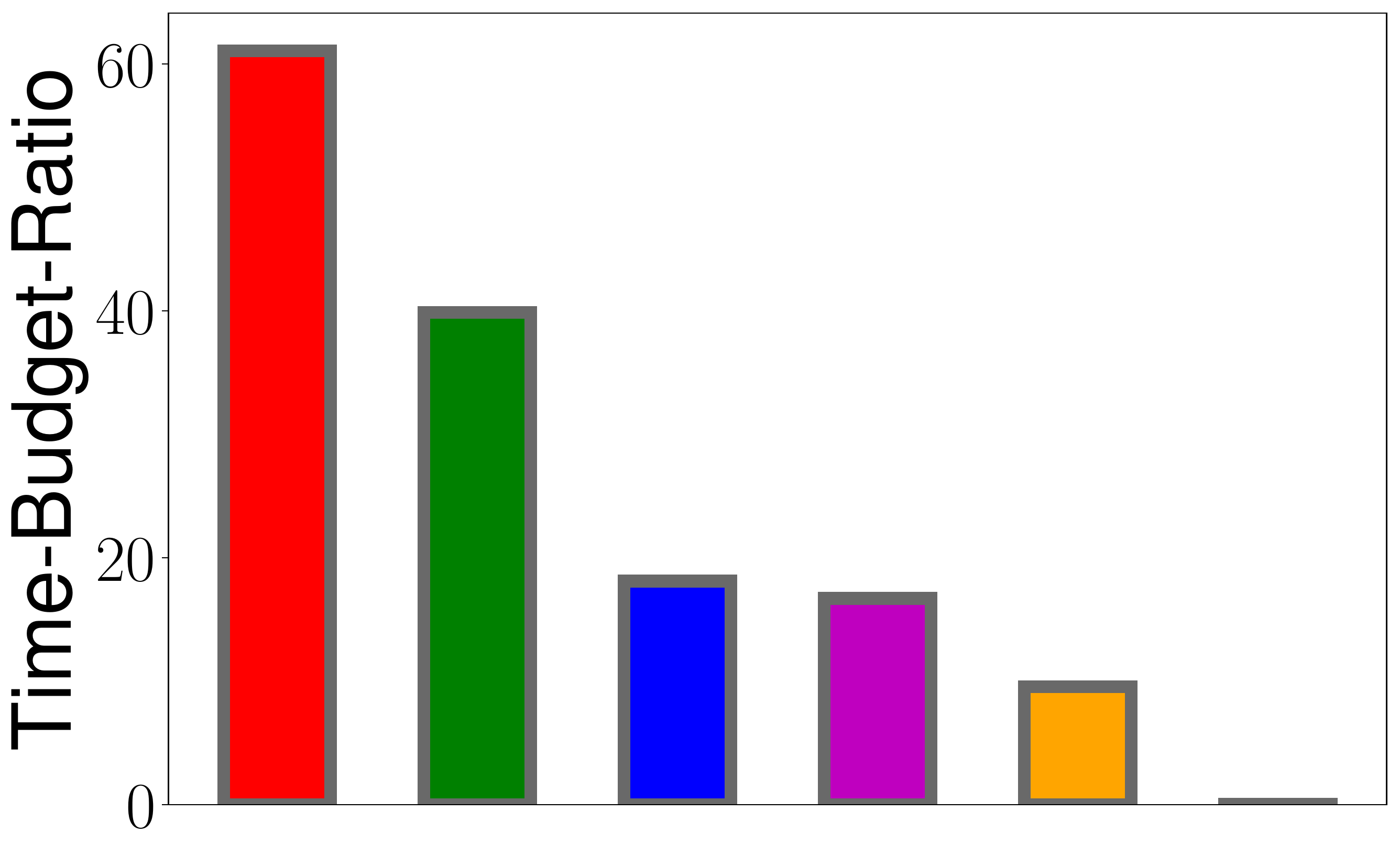}
			\caption{\small \textit{Navier}}
		\end{subfigure}
	\end{tabular}
	\caption{\small } \label{fig:time_efficiency}
\end{figure*}
}
\section{Conclusion}
We have presented \ours, a budget-aware, batch multi-fidelity active learning algorithm for high-dimensional outputs. Our weighted greedy algorithm can efficiently generate a batch of input-fidelity pairs for querying under the budget constraint, without the need for combinatorially searching over the fidelities, while achieving good approximation guarantees. The results on several typical computational physical applications are encouraging. 

\section*{Acknowledgments}
This work has been supported by MURI AFOSR grant FA9550-20-1-0358 and NSF CAREER Award IIS-2046295. JP thanks NSF CDS\&E-1953350, IIS-1816149, CCF-2115677, and Visa Research.  
\bibliographystyle{apalike}
\bibliography{BDMFAL}

\begin{thebibliography}{}

\bibitem[Ash et~al., 2019]{ash2019deep}
Ash, J.~T., Zhang, C., Krishnamurthy, A., Langford, J., and Agarwal, A. (2019).
\newblock Deep batch active learning by diverse, uncertain gradient lower
  bounds.
\newblock In {\em International Conference on Learning Representations}.

\bibitem[Balcan et~al., 2009]{balcan2009agnostic}
Balcan, M.-F., Beygelzimer, A., and Langford, J. (2009).
\newblock Agnostic active learning.
\newblock {\em Journal of Computer and System Sciences}, 75(1):78--89.

\bibitem[Balcan et~al., 2007]{balcan2007margin}
Balcan, M.-F., Broder, A., and Zhang, T. (2007).
\newblock Margin based active learning.
\newblock In {\em International Conference on Computational Learning Theory},
  pages 35--50. Springer.

\bibitem[Bickel and Doksum, 2015]{bickel2015mathematical}
Bickel, P.~J. and Doksum, K.~A. (2015).
\newblock {\em Mathematical statistics: basic ideas and selected topics, volume
  I}, volume 117.
\newblock CRC Press.

\bibitem[Bozeman and Dalton, 1973]{bozeman1973numerical}
Bozeman, J.~D. and Dalton, C. (1973).
\newblock Numerical study of viscous flow in a cavity.
\newblock {\em Journal of Computational Physics}, 12(3):348--363.

\bibitem[Chorin, 1968]{chorin1968numerical}
Chorin, A.~J. (1968).
\newblock Numerical solution of the navier-stokes equations.
\newblock {\em Mathematics of computation}, 22(104):745--762.

\bibitem[Conti and O’Hagan, 2010]{conti2010bayesian}
Conti, S. and O’Hagan, A. (2010).
\newblock Bayesian emulation of complex multi-output and dynamic computer
  models.
\newblock {\em Journal of statistical planning and inference}, 140(3):640--651.

\bibitem[Dasgupta, 2011]{dasgupta2011two}
Dasgupta, S. (2011).
\newblock Two faces of active learning.
\newblock {\em Theoretical computer science}, 412(19):1767--1781.

\bibitem[Ducoffe and Precioso, 2018]{ducoffe2018adversarial}
Ducoffe, M. and Precioso, F. (2018).
\newblock Adversarial active learning for deep networks: a margin based
  approach.
\newblock {\em arXiv preprint arXiv:1802.09841}.

\bibitem[Gal and Ghahramani, 2016]{gal2016dropout}
Gal, Y. and Ghahramani, Z. (2016).
\newblock Dropout as a {B}ayesian approximation: Representing model uncertainty
  in deep learning.
\newblock In {\em international conference on machine learning}, pages
  1050--1059.

\bibitem[Gal et~al., 2017]{gal2017deep}
Gal, Y., Islam, R., and Ghahramani, Z. (2017).
\newblock Deep bayesian active learning with image data.
\newblock In {\em International Conference on Machine Learning}, pages
  1183--1192.

\bibitem[Geifman and El-Yaniv, 2017]{geifman2017deep}
Geifman, Y. and El-Yaniv, R. (2017).
\newblock Deep active learning over the long tail.
\newblock {\em arXiv preprint arXiv:1711.00941}.

\bibitem[Gissin and Shalev-Shwartz, 2019]{gissin2019discriminative}
Gissin, D. and Shalev-Shwartz, S. (2019).
\newblock Discriminative active learning.
\newblock {\em arXiv preprint arXiv:1907.06347}.

\bibitem[Hanneke et~al., 2014]{hanneke2014theory}
Hanneke, S. et~al. (2014).
\newblock Theory of disagreement-based active learning.
\newblock {\em Foundations and Trends{\textregistered} in Machine Learning},
  7(2-3):131--309.

\bibitem[Houlsby et~al., 2011]{houlsby2011bayesian}
Houlsby, N., Husz{\'a}r, F., Ghahramani, Z., and Lengyel, M. (2011).
\newblock Bayesian active learning for classification and preference learning.
\newblock {\em arXiv preprint arXiv:1112.5745}.

\bibitem[Huang et~al., 2010]{huang2010active}
Huang, S.-J., Jin, R., and Zhou, Z.-H. (2010).
\newblock Active learning by querying informative and representative examples.
\newblock In {\em Advances in neural information processing systems}, pages
  892--900.

\bibitem[Joshi et~al., 2009]{joshi2009multi}
Joshi, A.~J., Porikli, F., and Papanikolopoulos, N. (2009).
\newblock Multi-class active learning for image classification.
\newblock In {\em 2009 IEEE Conference on Computer Vision and Pattern
  Recognition}, pages 2372--2379. IEEE.

\bibitem[Kato, 2013]{kato2013perturbation}
Kato, T. (2013).
\newblock {\em Perturbation theory for linear operators}, volume 132.
\newblock Springer Science \& Business Media.

\bibitem[Kennedy and O'Hagan, 2000]{kennedy2000predicting}
Kennedy, M.~C. and O'Hagan, A. (2000).
\newblock Predicting the output from a complex computer code when fast
  approximations are available.
\newblock {\em Biometrika}, 87(1):1--13.

\bibitem[Keshavarzzadeh et~al., 2018]{keshavarzzadeh2018parametric}
Keshavarzzadeh, V., Kirby, R.~M., and Narayan, A. (2018).
\newblock Parametric topology optimization with multi-resolution finite element
  models.
\newblock {\em arXiv preprint arXiv:1808.10367}.

\bibitem[Kingma and Welling, 2013]{kingma2013auto}
Kingma, D.~P. and Welling, M. (2013).
\newblock Auto-encoding variational bayes.
\newblock {\em arXiv preprint arXiv:1312.6114}.

\bibitem[Kirsch et~al., 2019]{kirsch2019batchbald}
Kirsch, A., van Amersfoort, J., and Gal, Y. (2019).
\newblock Batch{B}ald: Efficient and diverse batch acquisition for deep
  bayesian active learning.
\newblock In {\em Advances in Neural Information Processing Systems}, pages
  7026--7037.

\bibitem[Kleinegesse and Gutmann, 2020]{kleinegesse2020bayesian}
Kleinegesse, S. and Gutmann, M.~U. (2020).
\newblock Bayesian experimental design for implicit models by mutual
  information neural estimation.
\newblock In {\em International Conference on Machine Learning}, pages
  5316--5326. PMLR.

\bibitem[Krause and Golovin, 2014]{krause2014submodular}
Krause, A. and Golovin, D. (2014).
\newblock Submodular function maximization.
\newblock {\em Tractability}, 3:71--104.

\bibitem[Krause and Guestrin, 2005]{krause2005near}
Krause, A. and Guestrin, C.~E. (2005).
\newblock Near-optimal nonmyopic value of information in graphical models.
\newblock In {\em Proc. of Uncertainty in Artificial Intelligence (UAI)}.

\bibitem[Krause et~al., 2008]{krause2008near}
Krause, A., Singh, A., and Guestrin, C. (2008).
\newblock Near-optimal sensor placements in gaussian processes: Theory,
  efficient algorithms and empirical studies.
\newblock {\em Journal of Machine Learning Research}, 9(Feb):235--284.

\bibitem[Leskovec et~al., 2007]{leskovec2007cost}
Leskovec, J., Krause, A., Guestrin, C., Faloutsos, C., VanBriesen, J., and
  Glance, N. (2007).
\newblock Cost-effective outbreak detection in networks.
\newblock In {\em Proceedings of the 13th ACM SIGKDD international conference
  on Knowledge discovery and data mining}, pages 420--429.

\bibitem[Li et~al., 2021]{li2021batch}
Li, S., Kirby, R., and Zhe, S. (2021).
\newblock Batch multi-fidelity bayesian optimization with deep auto-regressive
  networks.
\newblock {\em Advances in Neural Information Processing Systems},
  34:25463--25475.

\bibitem[Li et~al., 2022]{li2020deep}
Li, S., Wang, Z., Kirby, R.~M., and Zhe, S. (2022).
\newblock Deep multi-fidelity active learning of high-dimensional outputs.
\newblock {\em Proceedings of the Twenty-Fifth International Conference on
  Artificial Intelligence and Statistics}.

\bibitem[Li and Guo, 2013]{li2013adaptive}
Li, X. and Guo, Y. (2013).
\newblock Adaptive active learning for image classification.
\newblock In {\em Proceedings of the IEEE Conference on Computer Vision and
  Pattern Recognition}, pages 859--866.

\bibitem[Mendes et~al., 2020]{mendes2020trimtuner}
Mendes, P., Casimiro, M., Romano, P., and Garlan, D. (2020).
\newblock Trimtuner: Efficient optimization of machine learning jobs in the
  cloud via sub-sampling.
\newblock In {\em 2020 28th International Symposium on Modeling, Analysis, and
  Simulation of Computer and Telecommunication Systems (MASCOTS)}, pages 1--8.
  IEEE.

\bibitem[Oehlert, 1992]{oehlert1992note}
Oehlert, G.~W. (1992).
\newblock A note on the delta method.
\newblock {\em The American Statistician}, 46(1):27--29.

\bibitem[Olsen-Kettle, 2011]{olsen2011numerical}
Olsen-Kettle, L. (2011).
\newblock Numerical solution of partial differential equations.
\newblock {\em Lecture notes at University of Queensland, Australia}.

\bibitem[Paszke et~al., 2019]{paszke2019pytorch}
Paszke, A., Gross, S., Massa, F., Lerer, A., Bradbury, J., Chanan, G., Killeen,
  T., Lin, Z., Gimelshein, N., Antiga, L., et~al. (2019).
\newblock Pytorch: An imperative style, high-performance deep learning library.
\newblock In {\em Advances in neural information processing systems}, pages
  8026--8037.

\bibitem[Schohn and Cohn, 2000]{schohn2000less}
Schohn, G. and Cohn, D. (2000).
\newblock Less is more: Active learning with support vector machines.
\newblock In {\em ICML}, volume~2, page~6. Citeseer.

\bibitem[Sener and Savarese, 2018]{sener2018active}
Sener, O. and Savarese, S. (2018).
\newblock Active learning for convolutional neural networks: A core-set
  approach.
\newblock In {\em International Conference on Learning Representations}.

\bibitem[Settles, 2009]{settles2009active}
Settles, B. (2009).
\newblock Active learning literature survey.
\newblock Technical report, University of Wisconsin-Madison Department of
  Computer Sciences.

\bibitem[Settles et~al., 2008]{settles2008active}
Settles, B., Craven, M., and Friedland, L. (2008).
\newblock Active learning with real annotation costs.
\newblock In {\em Proceedings of the NIPS workshop on cost-sensitive learning},
  volume~1. Vancouver, CA:.

\bibitem[Sigmund, 1997]{sigmund1997design}
Sigmund, O. (1997).
\newblock On the design of compliant mechanisms using topology optimization.
\newblock {\em Journal of Structural Mechanics}, 25(4):493--524.

\bibitem[Tong and Koller, 2001]{tong2001support}
Tong, S. and Koller, D. (2001).
\newblock Support vector machine active learning with applications to text
  classification.
\newblock {\em Journal of machine learning research}, 2(Nov):45--66.

\bibitem[Zienkiewicz et~al., 1977]{zienkiewicz1977finite}
Zienkiewicz, O.~C., Taylor, R.~L., Zienkiewicz, O.~C., and Taylor, R.~L.
  (1977).
\newblock {\em The finite element method}, volume~36.
\newblock McGraw-hill London.

\end{thebibliography}

\appendix

\section{Appendix}
\begin{figure}[H]
	\centering
	\includegraphics[width=0.5\textwidth]{./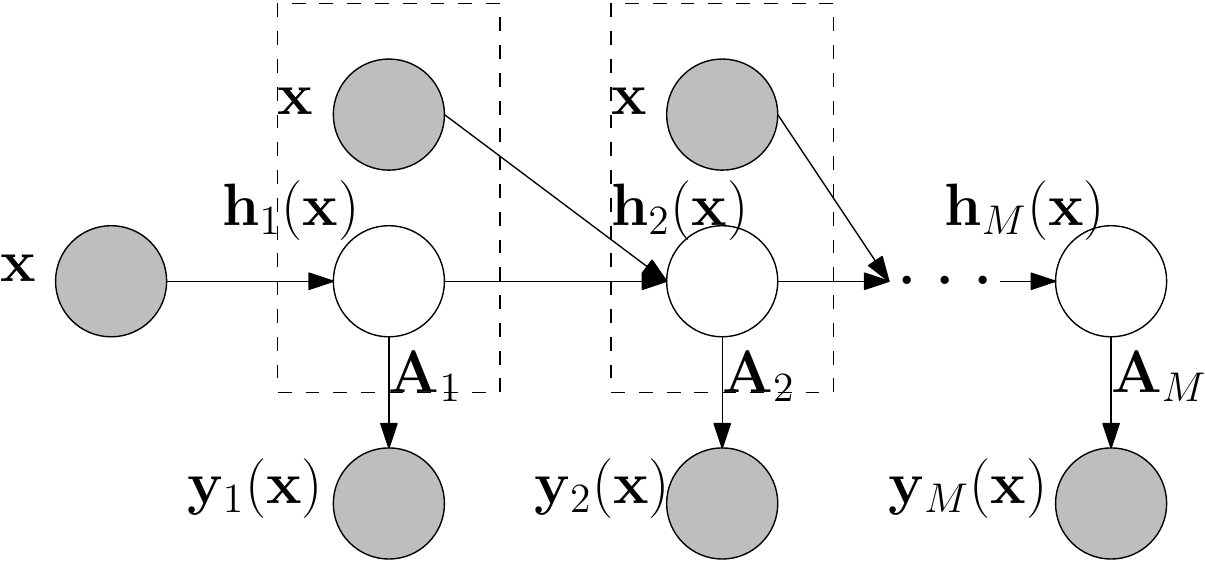}
	\caption{\small Graphical representation of DMFAL. The low dimensional latent output in each fidelity $\h_m(\x)$ ($1\le m \le M$) is generated by a (deep) neural network.} \label{fig:graphical}
\end{figure}
We now formally prove our main theoretical results on the approximate optimization properties of the Weighted-Greedy algorithm that we have proposed.  In particular, these bounds are relative to the optimal algorithm with a budget $B$, we denote its mutual information as OPT($B$).  We note that the optimal is with respect to the measurement of 
$\frac{1}{A} \sum_{l=1}^A \bbI\left(\Ycal_k, \y_M(\x'_l)|\Dcal\right)$ 
on the $A$ Monte Carlo samples, and only over the space of inputs $\Omega$ and fidelities $\Mcal$ we consider.  If more Monte Carlo samples are considered, or somehow mutual information is computed precisely, or more fidelities are searched over, then the OPT($B$) considered will increase, and the near-optimality of the greedy algorithm will continue to be approximately proportional to that optimal potential value.  Since \dmfal can actively choose an optimal $\x \in \Omega$ for a fixed fidelity $m$, which is already optimized over a continuous space, the optimal bound we consider OPT($B$) is relative to this method.  

We now restate and prove the main results.

\begin{theorem}[Theorem \ref{thm:near-sm-opt}]
At any step of Weighted-Greedy (Algorithm \ref{algo:algo}) before any choice of fidelity would exceed the budget, and the total budget used to that point is $B' < B$, then the mutual information of the current solution is within $(1-1/e)$ of OPT($B'$).  
\end{theorem}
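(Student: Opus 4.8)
The plan is to recognize Weighted-Greedy as classical greedy submodular maximization run \emph{in units of cost}, and then to replay the standard potential-decrease argument with the step count replaced by the budget consumed. Write $g(S)=\frac1A\sum_{l=1}^A\bbI(\Ycal_S,\y_M(\x'_l)\,|\,\Dcal)$ for a set $S\subseteq\Omega\times\Mcal$ of input-fidelity pairs, where $\Ycal_S=\{\y_m(\x):(\x,m)\in S\}$. Following the invocation of \citep{krause2005near} in the main text, I take $g$ to be monotone and submodular in $S$; it is a nonnegative average of ``reduction-in-uncertainty-about-a-target'' terms and satisfies $g(\emptyset)=0$. Assuming $\Omega$ is compact and the model quantities continuous, the inner argmaxes in Algorithm \ref{algo:algo} and $\mathrm{OPT}(B')=\max\{g(S):\sum_{(\x,m)\in S}\lambda_m\le B'\}$ (the latter ranging over sets of at most $B'/\lambda_1$ elements) are attained, with $\mathrm{OPT}(B')$ understood relative to the same finite-Monte-Carlo, delta-method model Weighted-Greedy itself uses, as flagged in the Appendix preamble.

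First I would index greedy's realized trajectory: $\Qcal_0=\emptyset,\Qcal_1,\dots,\Qcal_K$ with $G_i=g(\Qcal_i)$, and $b_i=C_i$ the budget used after $i$ picks, so $0=b_0<\cdots<b_K=B'$ and $b_{i+1}-b_i=\lambda_{m^*_{i+1}}$. The crucial use of the hypothesis — that up to the current step no greedy choice has been blocked by the budget — is that at each realized step $i{+}1\le K$ the selected pair is the \emph{global} maximizer of the density $(g(\Qcal_i\cup\{(\x,m)\})-G_i)/\lambda_m$ over all of $\Omega\times\Mcal$, not merely over the currently affordable fidelities. Hence, with $\rho_{i+1}:=(G_{i+1}-G_i)/(b_{i+1}-b_i)$, every pair $(\x,m)$ obeys $g(\Qcal_i\cup\{(\x,m)\})-G_i\le\lambda_m\rho_{i+1}$. (Submodularity also gives $\rho_1\ge\rho_2\ge\cdots$, which is the ``copies within a group do not diminish, different groups only shrink'' picture behind the proof sketch, though this monotonicity is not actually needed below.)

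Next I would fix the optimum $O$ for budget $B'$, with $\sum_{(\x,m)\in O}\lambda_m\le B'$, and set $D_i=\mathrm{OPT}(B')-G_i=g(O)-G_i$. By monotonicity, the standard submodular subadditivity of marginal gains, and the density bound,
\[
D_i\le g(O\cup\Qcal_i)-G_i\le\sum_{(\x,m)\in O\setminus\Qcal_i}\big(g(\Qcal_i\cup\{(\x,m)\})-G_i\big)\le\rho_{i+1}\sum_{(\x,m)\in O}\lambda_m\le\rho_{i+1}B'.
\]
Writing $\rho_{i+1}=(D_i-D_{i+1})/(b_{i+1}-b_i)$ and rearranging yields $D_{i+1}\le D_i\big(1-(b_{i+1}-b_i)/B'\big)$. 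Iterating over $i=0,\dots,K-1$, using $1-a\le e^{-a}$ and $\sum_{i=0}^{K-1}(b_{i+1}-b_i)/B'=(b_K-b_0)/B'=1$, gives $D_K\le D_0e^{-1}=e^{-1}g(O)$ (since $\Qcal_0=\emptyset$), i.e. $G_K\ge(1-1/e)\,\mathrm{OPT}(B')$, which is the claim. The corollary follows from the same recursion applied to the optimum for budget $B$: the realized budget then exceeds $B$, so the exponent is at least $1$ and $e^{-1}$ still dominates.

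The main obstacle is not the algebra but isolating exactly where the hypothesis ``before any choice of fidelity would exceed the budget'' is indispensable: it is precisely what makes the greedy pick at every realized step the globally density-maximal pair, so that $g(\Qcal_i\cup\{(\x,m)\})-G_i\le\lambda_m\rho_{i+1}$ holds for \emph{arbitrary} $(\x,m)$ — in particular for elements of $O$ that may already be unaffordable at the intermediate state $\Qcal_i$. This is exactly the point at which, without the hypothesis, the cost-aware greedy counterexamples of \citep{leskovec2007cost} destroy the approximation ratio; once it is nailed down, the only remaining idea is that the per-step budgets $b_{i+1}-b_i$ telescope so the exponent is exactly $1$. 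A secondary, bookkeeping-level concern is legitimizing the submodular machinery over the continuous ground set $\Omega\times\Mcal$ and confirming $\mathrm{OPT}(B')$ is taken over the same restricted (finite-sample, delta-method-Gaussian) model — caveats the Appendix preamble already acknowledges.
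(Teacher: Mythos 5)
Your proof is correct, but it takes a genuinely different route from the paper's. The paper reduces the weighted problem to the classical \emph{unweighted} cardinality-constrained setting: it splits each pair $(\x,m)$ into $\lambda_m$ unit-weight copies each worth $\Delta_{\x,m}/\lambda_m$, argues that once one copy is selected all copies are selected consecutively (copies' scores do not decrease while, by submodularity, competitors' scores cannot increase), and then invokes the standard $(1-1/e)$ guarantee as a black box for a $B'$-step unweighted greedy. You instead run the potential-decrease recursion directly, parameterized by budget consumed: the hypothesis that no fidelity is yet blocked makes the greedy pick the global density maximizer, which yields $D_i \le \rho_{i+1} B'$ via monotonicity plus subadditivity of marginal gains over the optimal set, hence $D_{i+1}\le D_i\bigl(1-(b_{i+1}-b_i)/B'\bigr)$, and the telescoping of budget increments gives exponent exactly $1$. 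Your version buys something concrete: it is self-contained and avoids the slightly delicate bookkeeping the paper's reduction needs (that the optimum over unit-weight copies, which may contain partial groups, still dominates OPT($B'$), and that copies' marginal contributions remain well-defined and constant as a submodular function on the enlarged ground set); it also handles non-integer costs without comment, whereas the paper's copy argument implicitly assumes integer $\lambda_m$ and $B$. What the paper's route buys is a direct appeal to the textbook theorem and an intuitive picture that matches the proof sketch in Section 3.2. Your identification of precisely where the ``no fidelity would exceed the budget'' hypothesis enters --- making the density bound valid for \emph{all} $(\x,m)$, including unaffordable members of the optimal set --- is exactly the right diagnosis, and your one-line derivation of the corollary (exponent $\ge 1$ when the realized budget exceeds $B$) is an equally valid alternative to the paper's OPT-monotonicity argument.
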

\begin{proof}
    Given a set of elements $\tilde \Omega$ and a submodular objective function $\phi$, it is well known that if one greedily selects items from $\tilde \Omega$ that most increase $\phi$ at each step, then after $t$ steps, the selected set achieves an objective value in $\phi$ within a $(1-1/e)$-factor of the optimal set of $t$ elements from $\tilde \Omega$~\citep{krause2014submodular}.  Our objective $\frac{1}{A} \sum_{l=1}^A \bbI\left(\Ycal_k, \y_M(\x'_l)|\Dcal\right)$, where the mutual information is a classic submodular function~\citep{krause2005near}.  However, in our setting each item (an input-fidelity pair $(\x,m)$) has a cost $\lambda_m$ that counts against a total budget $B$.  Our proof will convert this setting back to the classic unweighted setting so we can invoke the standard $(1-1/e)$-result.  
    
    Our Weighted-Greedy algorithm instead chooses an $(\x,m) \in \Omega \times \Mcal$ to optimize $\hat a_{k+1} = \Delta_{\x,m}/\lambda_m$ where $\Delta_{\x,m} = \bbI\left(\Ycal_k \cup \{\y_m(\x)\}, \y_M(\x'_l)|\Dcal\right) - \bbI\left(\Ycal_k, \y_M(\x'_l)|\Dcal\right)$ is the increase in mutual information by adding $(\x,m)$.  
    By scaling this $\Delta_{\x,m}$ value by $1/\lambda_m$ we can imagine splitting the effect of $(\x,m)$ into $\lambda_m$ copies of itself, and considering each of these copies as unit-weight elements.  We next argue that our Weighted-Greedy algorithm will achieve the same result as if we split each item into $\lambda_m$ copies, and that the process on these copies aligns with the standard setting.  
    
    First lets observe Weighted-Greedy will achieve the same result as if each $(\x,m)$ was split into $\lambda_m$ copies.  When we split each $(\x,m)$ into copies, each maintains the same scaled contribution of $\Delta_{\x,m}/\lambda_m$ to our objective function.  And we greedily add the item with largest contribution.  So if some $(\x,m)$ has the largest contribution $\hat a_{k+1}$ in the weighted setting, then so will its unit weight copy in the unweighted setting.  In the unit weight setting, after we add the first copy, this may effect the $\Delta_{\x',m'}/\lambda_{m'}$ contribution of some items $(\x',m') \in \Qcal_k$.  By submodularity, all such items have diminishing returns and their contribution cannot increase.  However, the unit weight copies of $(\x,m)$ are essentially independent, and so their $\Delta_{\x,m}/\lambda_m$ score does not decrease (if we add all $\lambda_m$ we increase mutual information by a total of $\Delta_{\x,m}$).  Since no other item can increase its score, and the copies scores do not decrease, if they were selected for having the maximal score, they will continue to have the maximal score until they are exhausted.  Hence, if we select one unit weight copy, we will add all of them consecutively, simulating the effect of adding the single weighted $(\x,m)$ at total cost $\lambda_m$.  Note that by our assumption in the theorem statement, we can always add all of them.  
    
    Finally, we need to argue that this unit-weight setting can invoke the submodular optimization approximation result.  For integer $\lambda_m$ and $B$ values, then this unit-weight version runs a submodular optimization with $B' < B$ steps. The acquisition function used to determine the greedy step is $\hat a_{k+1} = \Delta_{\x,m}/\lambda_m$, but since we have divided each item $(\x,m)$ into unit weight components with independent contribution to the mutual information $\frac{1}{A} \sum_{l=1}^A \bbI\left(\Ycal_k, \y_M(\x'_l)|\Dcal\right)$ they satisfy submodularity.  Then the weight is the same among all items so it can be ignored, and it maps to the standard submodular optimization with $B'$ steps, and achieves within $(1-1/e)$ of OPT($B'$) as desired.      
\end{proof}

\begin{corollary}[Corollary \ref{cor:sm-opt+}]
If Weighted-Greedy (Algorithm \ref{algo:algo}) is run until input-fidelity pair $(\x,m)$ that corresponds with the maximal acquisition function $\hat a_{k+1}(\x,m)$ would exceed the budget, it selects that input-fidelity pair anyways (the solution exceeds the budget $B$) and then terminates, the solution obtained is within $(1-1/e)$ of OPT($B$).  
\end{corollary}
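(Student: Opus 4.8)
The plan is to reuse the unit-weight reduction from the proof of Theorem~\ref{thm:near-sm-opt} and carry the greedy analysis one block past the point at which the budget is reached. Recall from that proof that splitting each pair $(\x,m)$ into $\lambda_m$ unit-weight copies turns Weighted-Greedy into ordinary greedy for monotone submodular maximization under a cardinality constraint on the expanded ground set, where the objective is $\frac1A\sum_{l=1}^A \bbI\big(\Ycal_k,\y_M(\x'_l)\mid\Dcal\big)$ and adding one unit copy increases it by the current value of $\hat{a}_{k+1}$. Write $S_\ell$ for the solution after $\ell$ unit-copy additions, so that greedy produces a chain $S_0\subseteq S_1\subseteq\cdots$. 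As in Theorem~\ref{thm:near-sm-opt}, assume $\{\lambda_m\}$ and $B$ are integers; then any feasible solution, in particular an optimal one achieving $\mathrm{OPT}(B)$, is a set of at most $B$ unit copies.

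I would then observe that the run described in the corollary performs at least $B$ of these unit steps. By hypothesis the process proceeds until the globally maximal-acquisition pair $(\x^*,m^*)$ first fails to fit in the remaining budget; at every earlier step that pair was feasible, so the budget-restricted $\argmax$ of Algorithm~\ref{algo:algo} coincided with the unrestricted greedy choice. At the moment of termination the used budget is some $B'<B$ with $B'+\lambda_{m^*}>B$, and the corollary's rule appends the whole block of $\lambda_{m^*}$ copies of $(\x^*,m^*)$ and halts, so the returned set is $S_{B'+\lambda_{m^*}}$ with $B'+\lambda_{m^*}>B$; in particular it contains $S_B$. Now I invoke the same $(1-1/e)$ black box used in Theorem~\ref{thm:near-sm-opt}, but with cardinality parameter $B$ rather than $B'$: greedy's $B$-step solution satisfies $f(S_B)\ge(1-1/e)\,\mathrm{OPT}(B)$ (equivalently, by unrolling the standard recurrence $\mathrm{OPT}(B)-f(S_\ell)\le B\big(f(S_{\ell+1})-f(S_\ell)\big)$, which holds because the $(\ell{+}1)$-st greedy step picks the unit copy of maximal marginal gain and the $\le B$ copies composing the optimum each contribute at most that gain). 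By monotonicity of mutual information, $f(S_{B'+\lambda_{m^*}})\ge f(S_B)\ge(1-1/e)\,\mathrm{OPT}(B)$, which is exactly the claimed bound (the returned solution is permitted to, and does, slightly exceed the budget).

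The routine bookkeeping above is not the hard part; the main obstacle is the point already subtle in Theorem~\ref{thm:near-sm-opt}, namely that the ``split into $\lambda_m$ independent unit copies'' construction genuinely defines a monotone submodular function on the expanded ground set for which the cardinality-$B$ greedy guarantee against $\mathrm{OPT}(B)$ is legitimate. Concretely, one must check that expressing the optimal solution in terms of unit copies does not overcount its value, that within a block the greedy marginal stays maximal so that greedy on items simulates greedy on copies, and that diminishing returns across blocks follow from submodularity of $\bbI$. I would dispatch these exactly as the theorem's proof does, and I would flag the integrality assumption on $\{\lambda_m\}$ and $B$ on which the whole reduction rests.
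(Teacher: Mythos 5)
Your proof is correct, but it reaches the conclusion by a different route than the paper's. The paper's own proof is two lines: it lets $B^+ \geq B$ denote the total budget actually consumed by the extended run, applies Theorem~\ref{thm:near-sm-opt} as a black box at the enlarged budget $B^+$ to conclude the solution is within $(1-1/e)$ of OPT($B^+$), and then uses monotonicity of OPT in the budget, OPT($B^+$) $\geq$ OPT($B$). You instead re-enter the unit-copy reduction, note that the truncated run's output contains the $B$-step prefix $S_B$ of the unit-copy greedy chain, apply the standard cardinality-$B$ greedy recurrence to get $f(S_B) \geq (1-1/e)\,\mathrm{OPT}(B)$, and finish with monotonicity of the objective $f$ rather than of OPT. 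Both arguments rest on exactly the same machinery --- the block-splitting construction, whose subtleties (no overcounting of the optimum's value, maximality of the per-copy gain within a block, diminishing returns across blocks) you correctly identify as the real content and defer to the theorem's proof, just as the paper does. Your version has one modest advantage: the paper's invocation of Theorem~\ref{thm:near-sm-opt} at budget $B^+$ sits slightly outside that theorem's stated hypotheses, which require the budget used so far to be strictly below the budget and no choice to exceed it, whereas the run ends having used exactly $B^+$; comparing against the prefix $S_B$ sidesteps that edge case. The integrality assumption on $\{\lambda_m\}$ and $B$ that you flag is likewise implicit in the paper's proof of the theorem, so you are not assuming anything extra.
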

\begin{proof}
    Consider that the extended Weighted-Greedy algorithm terminates using total $B^+ \geq B$ total budget.  By Theorem \ref{thm:near-sm-opt}, if we had $B^+$ budget, then this would achieve within $(1-1/e)$ of OPT($B^+$).  And since OPT($B^+$) $\geq$ OPT($B$), then this is within $(1-1/e)$ of OPT($B$) as well.  
\end{proof}

These results imply that the Weight-Greedy algorithm achieves the desired $(1-1/e)$-approximation until we are near the budget, or we slightly exceed it.  If the maximal weight item $\lambda_M$ is close to the full budget, then we are always in this unbounded case -- or may need to greatly exceed the budget to obtain a guarantee.  However, on the other hand, if $\lambda_M$ is fixed and the budget $B$ increases, then our bounds become more accurate.  In either case we can obtain a score within $(1-\frac{\lambda_M}{B})(1-1/e)$ of the OPT at a budget $B$ -- by excluding the part where the greedy choice may exceed the budget.  So as $\lambda_M/B$ goes to $0$, then the approximation goes to $(1-1/e)$.

While we have proven these results in the context of the specific approximated mutual information and parameter space $\Omega \times \Mcal$ these nearly $(1-1/e)$-optimal results will apply to any submodular optimization function, scaled by its optomal cost with a budget $B$.

Note that \citet{leskovec2007cost} proposed another approach to dealing with this budgeted submodular optimization.  They proposed to run two optimization schemes, one the method we analyze, and one that simply chooses the items that maximize $\Delta_{\x,m}$ at each step while ignoring the difference in their cost $\lambda_m$.  They show that while the first one may not achieve $(1-1/e)$-approximation, one of these schemes must achieve that optimality.  The cost of running both of them, however, is twice the budget, so in the worst case this combined scheme only achieves within $(1/2)(1-1/e)$ of the optimal.  This run-twice approach is also wasteful in practice, so we focused on showing what could be proven (near $(1-1/e)$-approximation) of just Weighted-Greedy.  In fact, as long as $\lambda_M/B \leq 1/2$, we already match their worst case bound.

\end{document}